\documentclass[sigconf, nonacm]{acmart}

\usepackage{amsfonts, amsmath}
\usepackage{xcolor}
\usepackage{subcaption}
\usepackage{amsthm}
\usepackage[capitalise]{cleveref}
\usepackage{enumitem}
\usepackage{makecell}
\usepackage{pifont}

\DeclareMathOperator*{\argmax}{argmax}
\DeclareMathOperator*{\argmin}{argmin}

\newtheorem{theorem}{Theorem}
\newtheorem{remark}{Remark}
\newtheorem{definition}{Definition}
\newtheorem{assumption}{Assumption}

\newtheorem{claim}[theorem]{Claim}

\newcommand{\algname}{\textit{Item-Weighted PCA}}

\newcommand{\bR}{\mathbb{R}}
\newcommand{\hX}{\hat{X}}

\AtBeginDocument{%
  }

\begin{document}

\title{When Collaborative Filtering is not Collaborative: Unfairness of PCA for Recommendations}

\author{David Liu}
\orcid{0000-0002-2129-447X}
\affiliation{%
  \institution{Northeastern University}
  \city{Boston}
  \state{MA}
  \country{USA}
}
\email{liu.davi@northeastern.edu}

\author{Jackie Baek}
\orcid{0000-0001-5538-509X}
\affiliation{%
  \institution{New York University}
  \department{Stern School of Business}
  \city{New York}
  \state{NY}
  \country{USA}
}
\email{baek@stern.nyu.edu}

\author{Tina Eliassi-Rad}
\orcid{0000-0002-1892-1188}
\affiliation{%
  \institution{Northeastern University}
  \city{Boston}
  \state{MA}
  \country{USA}
}
\email{t.eliassirad@northeastern.edu}

\begin{abstract}
We study the fairness of dimensionality reduction methods for recommendations. We focus on the fundamental method of principal component analysis (PCA), which identifies latent components and produces a low-rank approximation via the leading components while discarding the trailing components. Prior works have defined notions of ``fair PCA''; however, these definitions do not answer the following question: \textit{why} is PCA unfair? We identify two underlying popularity mechanisms that induce item unfairness in PCA. The first negatively impacts less popular items because less popular items rely on trailing latent components to recover their values. The second negatively impacts highly popular items, since the leading PCA components specialize in individual popular items instead of capturing similarities between items. 
To address these issues, we develop a polynomial-time algorithm, \algname{}, that flexibly up-weights less popular items when optimizing for leading principal components. We theoretically show that PCA, in all cases, and Normalized PCA, in cases of block-diagonal matrices, are instances of \algname{}. We empirically show that there exist datasets for which \algname{} yields the optimal solution while the baselines do not. In contrast to past dimensionality reduction re-weighting techniques, \algname{} solves a convex optimization problem and enforces a hard rank constraint. Our evaluations on real-world datasets show that \algname{} not only mitigates both unfairness mechanisms, but also produces recommendations that outperform those of PCA baselines.
\end{abstract}

\keywords{algorithmic fairness, principal component analysis, PCA, collaborative filtering, recommender systems}
  
\maketitle

\begin{figure*}[ht]
    \centering
    \includegraphics[width=1.0\textwidth]{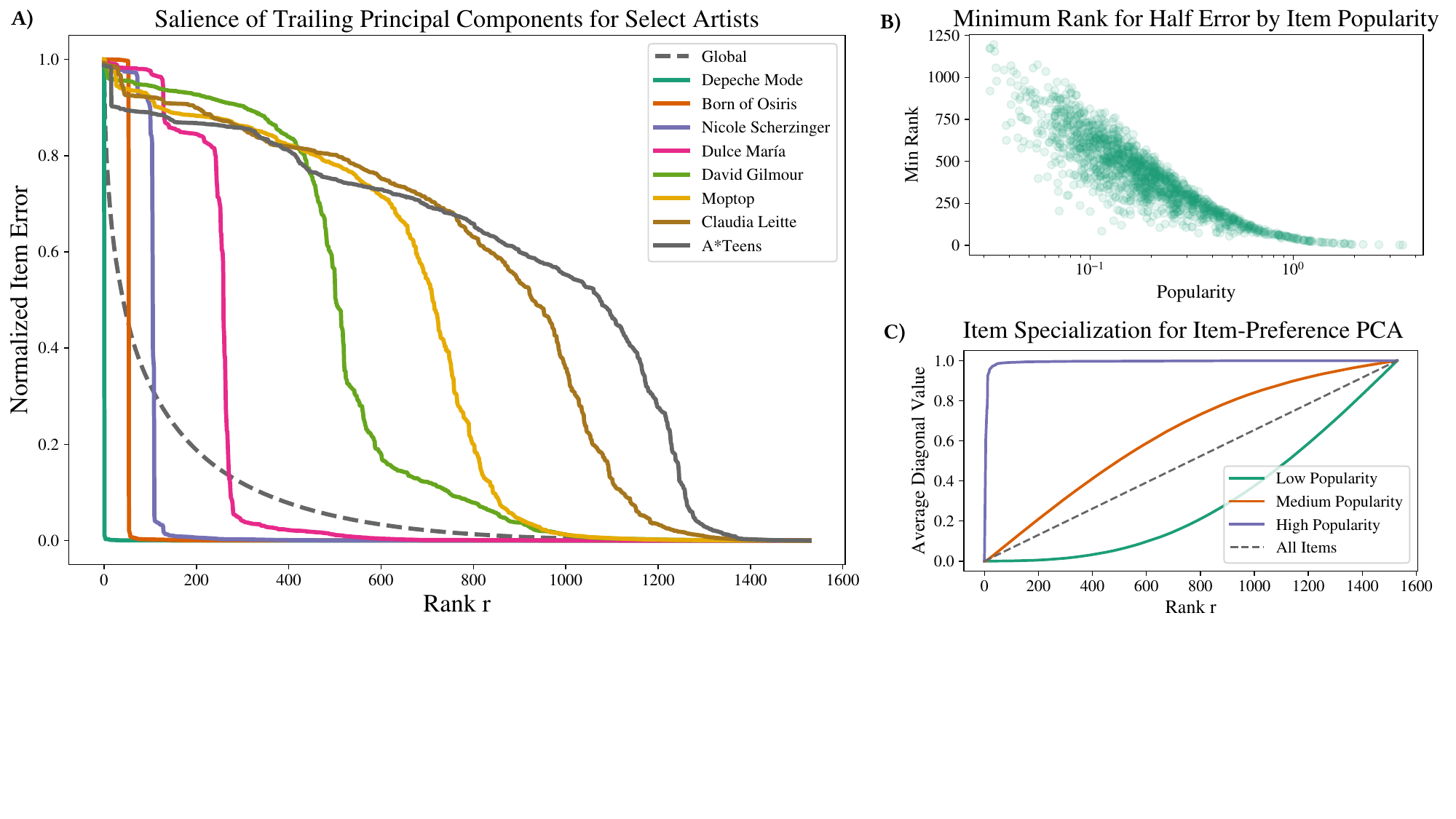}
    \Description{Collection of three plots. Plot A shows the Normalized Item-Error (y-axis) as a function of embedding rank (x-axis) for several artists. Each artist observes a drop in error at a specific rank value. Plot B is a scatter plot where each point is an artist; the x-axis is the popularity, and the y-axis is the rank at which the reconstruction error is halved. The points are in a downward sloping curve. Plot C shows the average diagonal value among high, medium and low popularity artists at various rank values. All curves start at the origin. As rank increases, the high popularity curve immediately increases while the curves for medium and low popularity are gradual.}
    \caption{
    To show the unfairness of PCA for recommendations, we run PCA on the LastFM dataset.
    Subfigure (A) shows the normalized item error as a function of the rank for eight different artists, as well as the overall error in the dotted line. While on average the PCA approximation exhibits diminishing returns as the rank increases, for individual items, specific components are critical for improving approximation quality.
    Subfigure (B) shows the first unfairness mechanism: less popular items rely on trailing principal components. The plot shows that high-popularity artists require fewer components to halve the initial approximation error while less-popular artists rely on trailing components.
    Subfigure (C) shows the second unfairness mechanism: PCA components specialize in individual items as opposed to collaborating across items. The diagonal values of PCA projection matrices indicate the degree of specialization.
    }
    \label{fig:teaser}
\end{figure*}

\section{Introduction}

The growing prevalence of machine learning algorithms in various real-world applications makes it important to understand the underlying mechanisms that drive the decision-making processes of these algorithms. In this context, our work focuses on a popular algorithm: Principal Component Analysis (PCA). Our goal is to understand the downstream effects of this algorithm. In doing so, we focus on identifying undesirable, systematic issues that may arise for the individuals affected by the algorithm's decisions. We use the term ``unfairness'' to refer to issues that have a negative or undesirable effect on an individual or a group of individuals.

PCA is a well-established dimensionality reduction technique widely used in many fields \citep{pearson1901liii,hotelling1933analysis}. PCA extracts key features from datasets by projecting them onto \textit{principal components}, thereby reducing the number of dimensions while preserving important information. PCA has many downstream applications, and what kind of ``unfairness issues'' exist depends heavily on the exact application. Therefore, we focus our work on one common application, namely recommendation systems.

\paragraph{Recommendation systems and collaborative filtering.}
We use the running example of the LastFM music platform, where users listen to music by various artists. (We refer to artists as \textit{items}.)
In this context, the goal of a recommendation system is to help users discover artists they would enjoy listening to.  Collaborative filtering (CF) is a popular approach for recommendations that relies on using data on user-item preferences to find patterns within these preferences. Dimensionality reduction methods, especially PCA, are commonly used for CF (e.g., \cite{goldberg2001eigentaste,kim2005collaborative,nilashi2015multi}). In this paper, we focus on the implications of using PCA for CF and, in particular, on identifying unfairness issues at the \textit{item}-level.

\paragraph{Contributions.}
We identify the mechanisms in the PCA algorithm that can negatively impact items in the context of recommendations and then develop an approach that combats these unfairness mechanisms. Our main contributions are as follows:
\begin{enumerate}
    \item We identify two mechanisms in which PCA may introduce an undesirable item-level impact within the context of CF.
    \begin{enumerate}
        \item The first mechanism is that the leading components of PCA often lack meaningful information related to less popular items.
        This may lead to fewer or worse-quality recommendations with respect to these less popular items.
        \item The second mechanism is that, in the existence of highly popular items, the leading components of PCA can each contain information about a \textit{single} popular item, rather than capturing similarities between items. Such components are not useful for the sake of CF, as they do not contain any ``collaborative'' information; this can adversely impact the recommendations related to the highly popular items.
    \end{enumerate}
    We demonstrate both of these mechanisms empirically through the LastFM dataset, summarized in \cref{fig:teaser}, as well as theoretically on a stylized class of matrices.
    \item We propose a computationally efficient algorithm called \algname{}, which up-weights less popular items. In contrast to past CF re-weighting techniques, \algname{} solves a convex optimization problem while also enforcing a hard rank constraint.
    \begin{enumerate}
        \item We consider two natural benchmark algorithms,  Vanilla PCA, as well as Normalized PCA, which normalizes each column of the matrix before performing PCA.
        We show that PCA is a specific instance of \algname{}, as is Normalized PCA in the case of item-regular block-diagonal matrices.
        \item We then show that the approach of setting the weights to be inversely proportional to an item's norm is an \textit{interpolation} between the two benchmark algorithms. For a stylized class of matrices, this interpolation balances popular and less popular items while the baselines do not. 
        We use this weighting procedure for all of our numerical experiments.
    \end{enumerate}
    \item We present empirical results demonstrating that \algname{} not only mitigates specialization but also yields recommendations that outperform those of PCA baselines.
\end{enumerate}

The rest of the paper is organized as follows: in \cref{sec:related_work}, we summarize the multiple areas of related work; in \cref{sec:unfairness_of_pca} we present the two unfairness mechanisms before introducing our algorithm \algname{} in \cref{sec:algorithm}. We present experimental evaluations in \cref{s:experiments} and discuss societal implications of our findings in \cref{sec:discussion}.
\section{Related Work} \label{sec:related_work}
Our work builds directly on a line of literature that examines the fairness of PCA. Below we summarize the findings of past fair PCA works and distinguish them from our motivation. Our algorithm also builds on a long literature of weighted dimensionality reduction and CF; below we summarize the purpose and properties of past weighted CF algorithms. Finally, our popularity unfairness mechanisms complement efforts to mitigate popularity bias in the recommender systems literature, but while many of these works focus on the outputs of recommender systems, we are focused on the internal collaboration mechanism.

\paragraph{\textbf{Fairness of PCA}}
The existing literature on fair PCA can be summarized as imposing a fairness constraint on the PCA problem and developing a new algorithm to satisfy this constraint.
Specifically, existing works assume that the set of users is partitioned into predefined groups (based on, e.g., race or gender). 
Several papers~\cite{fair-pca, fair-pca-multi-criteria, efficient-fair-pca, closed-form-fair-pca} define fairness as requiring the reconstruction error between groups of users to be ``balanced'', for different definitions of balance.
Alternatively, \citet{olfat2019convex} define the output of a PCA algorithm as fair if the group label cannot be inferred from the projected data, while \citet{mmd-fair-pca} aim to minimize the difference in the conditional distributions of the projected data. \cref{tab:alg-comparison} summarizes the differences between our work and existing literature. 
One difference is that prior works focus on \textit{user}-level fairness with pre-defined groups, whereas we focus on \textit{items}, with no reliance on group labels.

\begin{table}
\centering
\Description{Table summarizing past works on fair PCA.}
\caption{Comparison with existing papers studying fair PCA. Prior works define fairness with respect to users by analyzing differences in PCA outputs for different user groups, which requires access to group labels. For instance, past works characterize PCA outputs as unfair if they can be used to recover group membership or if approximation errors differ across groups. Instead, our work focuses on mitigating the underlying mechanisms of unfairness in PCA, which we identify as increasing collaboration among items.}
\begin{tabular}[b]{p{4.5cm} c c}
\toprule
\textbf{Fairness Notion} & 
\shortstack{\textbf{Fairness w.r.t.}\\\textbf{Users / Items}} & 
\shortstack{\textbf{Req.}\\\textbf{Labels}} \\
\midrule
Obfuscate group identifiability  \newline\cite{olfat2019convex, mmd-fair-pca} & Users & \checkmark \\
Balance error across groups \newline\cite{fair-pca, fair-pca-multi-criteria, efficient-fair-pca, closed-form-fair-pca} & Users & \checkmark \\
Increase collaboration (ours) & Items & \\
\bottomrule
\end{tabular}
\label{tab:alg-comparison}
\end{table}

There is also a significant difference in the \textit{motivation} of our work compared to existing papers, resulting in a distinction of when the works can be applied.
Specifically, existing works address situations where there is, a priori, an \textit{external constraint} that deems a particular fairness notion necessary, and these fairness notions are \textit{generic} in the sense that they can be defined in a generic machine learning context (e.g., equalize error across groups).

On the other hand, the motivation of our work is to \textit{identify} unfairness issues that arise \textit{specifically} from the PCA algorithm. The issues that we identify are not generic machine learning issues, and hence they would not necessarily be issues that one would be concerned about a priori. Our work helps elucidate the black-box nature of the PCA algorithm and contributes to situations where one does not have a particular fairness notion in mind but would like to understand what issues can arise from this specific algorithm. 

Analogs of this distinction appear in other areas. For example, in prediction, the seminal work of \citet{hardt2016equality} studies how to learn a classifier with an external fairness constraint (equality of opportunity). In contrast, \citet{chen2018my} and \citet{khani2020feature} also study fairness in prediction, but the goal is to identify mechanisms that induce bias in prediction, rather than the goal of developing algorithms that satisfy a fairness notion. 

\paragraph{\textbf{Weighted Dimensionality Reduction}}
Weighted dimensionality reduction is a well-studied problem and several past works have even leveraged these models to mitigate popularity bias \cite{steck2011item}. We note that a common challenge in weighted dimensionality reduction works is that it is difficult to enforce 1) convexity and 2) a hard rank constraint, where the rank of the approximation matrix is strictly upper bounded. The majority of weighted dimensionality reduction works setup non-convex problems and then optimize via gradient descent, alternating least squares, or expectation-maximization \cite{gantner2011personalized, steck2011item, bailey2012principal}. On the other hand, Max-Margin matrix factorization \cite{srebro2004maximum} is an example of a convex formulation; however, this method does not satisfy the hard rank constraint.

\paragraph{Handling missing data.}
Past works in causal inference have also proposed re-weighting methods to mitigate ``popularity bias'' for CF \cite{liang2016causal, schnabel2016recommendations}. These works tackle the problem that ratings are not missing at random and thus need to be re-weighted through techniques such as Inverse Propensity Weighting. In this line of work, the ``popularity bias'' stems from the ``missingness'' of the entries of the matrix.
In contrast, our work does not require that there are missing entries.
We study a different type of issue that is caused by the varying popularity of artists, which arises even if entries are not missing.
\cref{tab:re-weighting-comparison} summarizes the comparison between our work and past work on weighted dimensionality reduction.

\begin{table}
\centering
\Description{Table comparing features of past approaches to re-weighting for dimensionality reduction.}
\caption{Comparison with existing re-weighting techniques for dimensionality reduction, such as Inverse Propensity Weighting (IPS). Unlike prior works, our algorithm, \algname{}, solves a convex optimization problem while also enforcing a hard rank constraint. Further, the motivation for re-weighting in our work is independent of missing data. Instead, it is predicated on the fact that certain items are more popular than others. ``MF'' stands for matrix factorization.}
\begin{tabular}[b]{p{3.2cm} c c c}
\toprule
     & \textbf{Convex} & \textbf{Hard} & \textbf{Beyond}\\ 
    \textbf{Algorithm} & \textbf{Opt.} & \textbf{Rank} & \textbf{Missing Data}\\ \midrule
    \emph{IPS}~\cite{liang2016causal, schnabel2016recommendations} & & \checkmark &\\ 
    \emph{Weighted MF}~\cite{steck2011item, bailey2012principal, gantner2011personalized} & & \checkmark & \checkmark\\ 
    \emph{Max-Margin MF}~\cite{srebro2004maximum} & \checkmark & & \checkmark\\
    \algname{} (ours) & \checkmark & \checkmark & \checkmark\\
\bottomrule
\end{tabular}%
\label{tab:re-weighting-comparison}
\end{table}

\paragraph{\textbf{Popularity Bias}} Our work is adjacent to measuring and mitigating popularity bias in recommender systems. Popularity bias refers to the phenomena in which recommender systems disproportionately recommend already popular items, resulting in low exposure for low-popularity items and reduced discovery for users \cite{klimashevskaia2024survey}. In terms of measuring popularity bias, common metrics include summary statistics such as the coverage of the recommended items over the entire item set or the average popularity of recommended items \cite{abdollahpouri2019managingpopularitybiasrecommender}. More recently, popularity bias has also been defined in terms of equal opportunity, ensuring that head (popular) and tail (less popular) items have balanced true positive rates \cite{zhu2021popularitybias}. In terms of mitigating popularity bias, common approaches include post-processing with popularity compensation \cite{zhu2021popularityopportunity} or regularizing to minimize the correlation between prediction scores and popularity \cite{zhu2021popularityopportunity} or to increase recommendation diversity \cite{abdollahpouri2017controlling}. 

While the majority of popularity bias works focus on the recommendation output or scores, our work focuses on how the representations of popular and less popular items differ and understanding the modeling mechanisms that introduce these differences. Among the works that do consider popularity-aware representations, the focus is often on low data availability for tail items and transferring knowledge from head items \cite{chang2024cluster}; instead, in our work, we focus on ensuring dimensionality reduction methods preserve the preference data that has already been collected for less-popular items.

\section{Unfairness of PCA for Collaborative Filtering (CF)}\label{sec:unfairness_of_pca}

In this section, we describe two mechanisms of PCA that induce item-side unfairness. We first provide a background on PCA (\cref{sec:pca_prelim}), and then we describe the two mechanisms using an empirical example (\cref{sec:unfairness_empirical}). We then show that these mechanisms occur generally on a stylized class of matrices that represent user-item preferences (\cref{sec:unfairness_theory}). Throughout the rest of the paper, we use the notation denoted in \cref{tab:notation}.
\begin{table}[h]
    \centering
    \Description{A notation table.}
    \caption{Notations used in this paper.}
    \begin{tabular}{p{0.165\columnwidth} p{0.75\columnwidth}}
        \toprule
         \textbf{Symbol} &  \textbf{Meaning} \\ \midrule
         $n$, $m$ & number of users and items \\
         $X \in \bR_+^{n \times m}$ & user preference matrix \\
         $\hX$ & approximated user preference matrix \\
         $X_i$ , $X_{.j}$ & $i^{\text{th}}$ row and $j^{\text{th}}$ column of $X$, respectively \\ \hline
         $r$ & number of principal components \\
         $P \in \bR^{m \times m}$ & projection matrix for $X$ \\
         $P_r, I_r$ & the projection and identity matrices with rank $r$ \\
         $P'$ & a projection matrix with diagonal set to zero \\
         $U \in \bR^{m \times r}$ & matrix of $r$ principal components \\ \midrule
         $\sigma_i$ & the $i$-th leading singular value of a matrix \\
         $\{X_n\}$ & a seq. of preference matrices with increasing $n$ \\
         $[m]$ & $\{1, 2, \dots , m\}$ \\ \hline
         $w_j$ & weight for item $j$ in \algname{} \\
         $p_j$ & popularity for item $j$ (see \cref{def:popularity})\\
         $\gamma$ & popularity up-weight-factor in \algname{} \\
         $\mathbf{y}$ & ground-truth binary interaction label \\
         $\epsilon$ & optional accuracy tolerance hyperparameter in \algname{} \\
        \bottomrule
    \end{tabular}
    \label{tab:notation}
\end{table}

\subsection{PCA Preliminaries} \label{sec:pca_prelim}
Let $X \in \mathbb{R}_+^{n \times m}$ be a non-negative matrix of preferences over $n$ users and $m$ items. The entries of $X$ can be explicit score ratings for items or implicit interaction counts (e.g. clicks).
PCA applied to $X$ projects the matrix into a $r$-dimensional space, producing an approximation matrix $\widehat{X}$, where $r \ll m$ is a user-defined rank hyperparameter. Formally, PCA solves:
\begin{align} \label{eqn:vanilla-pca-objective}
\begin{split}
    \argmin_{P = UU^T}  &\| X - XP \|_F^2 \\
    \text{s.t.} \quad & U \in \mathbb{R}^{m \times r}, U^TU = I_r
\end{split}
\end{align}
The optimization is over projection matrices $P = UU^T$ where the columns of $U \in \mathbb{R}^{m \times r}$ form an orthonormal basis. The optimal projection matrix $P^*$ minimizes the reconstruction error $\|X - \widehat{X}\|_F^2$ between the original matrix and the approximation, where $\hX = XP^*$.

Note that the approximation matrix $\hX = XP^*$ is equivalent to taking the $r$-truncated Singular Value Decomposition (SVD) of $X$. 

The entries of $P$ can be interpreted as item-item similarity values, highlighting the collaborative nature. For instance, the prediction score for user $i$ and item $j$ is:
\begin{equation}\label{eqn:prediction-linear-combination}
    \hX_{ij} = \sum_{j'=1}^m P_{jj'} X_{ij'}
\end{equation}
That is, the predicted score for item $j$ is the linear combination of user $i$'s score for every other item weighted by the similarity with item $j$. Intuitively, if items $j$ and $j'$ are similar the score should also be similar. The goal of PCA then is to best approximate $X$ by learning a low-rank item-item similarity matrix $P$.

\paragraph{Recommendation Setup.}
We consider the recommendation setting in which $X \in \bR^{n \times m}$ is a matrix of preferences of $n$ users across $m$ items. Given that PCA optimizes the reconstruction error over the entire matrix, we assume missing values are imputed with a default value; in all of our experiments, the default value is $0$. An algorithm, such as PCA, then learns a low-rank approximation $\hX$, and the items that user $i$ has not previously rated or interacted with but have the highest values in the row $\hX_i$ are recommended. This setting is modeled from the collaborative filtering work of \citet{sarwar2001item}. In our analysis, we focus first on whether the projection matrices PCA yields reflect item-item collaboration before assessing the downstream performance of the recommendations.

\subsection{Two Unfairness Mechanisms of PCA}  
\label{sec:unfairness_empirical}

We describe two mechanisms of PCA that induces unfairness at the item level, and we demonstrate that these mechanisms arise on an empirical example. We use the lastfm-2k dataset \cite{lastfm}, which records the number of times a user of the LastFM music platform listened to their favorite artists. 
Specifically, if artist $j$ is one of user $i$'s top 25 artists, then $X_{ij}$ is the number of times user $i$ listened to artist $j$. Otherwise $X_{ij} = 0$.  
We use a dataset with $n = 1,867$ users and $m=1,529$ artists. To account for heterogeneity in user listening volume we pre-process by row-normalizing the matrix before applying PCA. See the Experiments section for a detailed description of this dataset.
We compute PCA on this matrix $X$ for all possible values of the rank $r$, from $0$ to $m$.
Let $P_r \in \bR^{m \times m}$ be the projection matrix corresponding to the output of PCA for rank $r$. Now we describe two ways in which PCA induces unfairness for items (artists).

\subsubsection{Mechanism 1: Unfairness for unpopular items.}
The overall reconstruction error, $\|X - XP_r\|^2_F$, decreases as $r$ increases in a diminishing returns fashion: see the dashed gray line in \cref{fig:teaser}, Subfigure A.
In fact, it can be shown that the reconstruction error decreases by exactly $\sigma_r^2$ at rank $r$ compared to $r-1$, where $|\sigma_1| \geq \dots \geq |\sigma_m|$ are the ordered singular values of $X$ (see \cref{thm:diminishing-returns} in the Appendix).

However, this pattern of diminishing returns does not occur at the individual item level. 
We define the \textit{normalized item error} for item $j$ as $\|X_{.j} - \hX_{.j}\|^2_2 / \|X_{.j}\|_2^2$, where $X_{.j}$ and $\hX_{.j}$ are the $j$'th columns of $X$ and $\hX$, respectively.
Subfigure A in \cref{fig:teaser} plots the normalized item error for eight individual artists (items), displaying the large heterogeneity in how errors decrease as a function of rank.  
For each artist, the normalized error is initially 1 when the rank is $0$ since $P = 0$, and drops sharply after some threshold rank is reached, where this threshold varies greatly by the artist.
For instance, the artist Claudia Leitte requires the rank to be greater than $1000$ before their normalized error decreases below 20\%.

In general, the leading components of PCA capture the artists who are popular. Let us define the popularity $p$ of item $j$ be the norm of $X_{.j}$, as shown in \cref{def:popularity}.
\begin{definition}\label{def:popularity}
    The popularity $p_j$ of item $j$ is $p_j = \|X_{.j}\|_2$.
\end{definition}
Subfigure B in \cref{fig:teaser} shows the relationship between artist popularity and the number of principal components needed to halve the initial reconstruction error of $\| X_{.j} \|_2^2$. The subfigure shows that leading principal components greatly reduce reconstruction errors for popular artists. The top-$10\%$ most popular artists require $78$ components, on average, to halve their error while the bottom $10\%$ requires $455$ of $1,529$ components.

\subsubsection{Mechanism 2: Unfairness for popular items.}
We now describe a completely different mechanism that negatively impacts popular items.
The previous mechanism showed that the leading components favor the popular items.
However, we find that the leading components can become \textit{specialized} in \textit{individual} items, which has undesirable consequences in the context of collaborative filtering.

Recall that PCA outputs a projection matrix $P \in \bR^{m \times m}$. 
We claim that it is undesirable for item $j$ for the diagonal entry, $P_{jj}$, to be close to 1 at low values of $r$, which is the case for popular artists as seen in Subfigure C of \cref{fig:teaser}.

Per \cref{eqn:prediction-linear-combination}, for an artist $j$, the approximation of the listening count for user $i$ is a linear combination of user $i$'s listening counts for all other items weighted by the item similarities. Now, if it is the case that the diagonal entry $P_{jj} = 1$, and  $P_{jj'} = 0$ for all $j' \neq j$, we recover a perfect reconstruction ($\hX_{ij} = X_{ij}$).
But this implies that the reconstructed preference of item $j$ is simply the original preference towards item $j$, which is not useful information in the context of collaborative filtering for recommendations.
This does not give us a way to infer whether a user will like item $j$ given their preferences over other items as all previously unseen artists will receive the same default prediction score.
The diagonal entry $P_{jj}$ being close to 1 implies that most of the reconstructed value for $\hX_{ij}$ is coming from $X_{ij}$.

\subsection{Theoretical Guarantees on a Stylized Class of Matrices}
\label{sec:unfairness_theory}
We now demonstrate that PCA asymptotically exhibits the above two mechanisms---to an extreme---in a class of matrices that represent user-item preferences with popularity heterogeneity.

We consider a sequence of matrices of increasing size, where both the number of users and items is growing.
Concretely, consider a sequence of matrices $\{X_n\}_{n \geq 1}$, where $X_n \in \{0, 1\}^{n \times m_n}$ and $m_n = o(n)$.
The $(i, j)$'th entry of $X_n$ is 1 if user $i$ interacts with item $j$, and 0 otherwise.

We assume that the items can be partitioned into two classes: popular items and unpopular items.
We assume that the first $M_n \leq m_n$ items ($j \in [M_n]$) are the popular items for $X_n$ and satisfy the following assumption:
\begin{assumption}[Popular items] \label{assump:popular}
Let $X'_n \in \{0, 1\}^{n \times m_n}$ be a copy of $X_n$ where all entries in columns $j > M_n$ are set to zero.
Then, we assume that the $M_n$'th largest singular value of $X'_n$, which we denote by $\sigma_{M_n}(X'_n)$, grows as $\Omega(\sqrt{n})$.
\end{assumption}
\begin{remark}
\cref{assump:popular} is satisfied with high probability if all entries of $X'_n$ are i.i.d. mean zero subgaussian random variables with unit variance; see Theorem 1.1 in \citet{rudelson2009smallest} and \cref{fig:bernoulli_eigvals} in the Appendix for empirical validation.
\end{remark} 

Next, we assume that for unpopular items, the number of users that like the item is a constant.
\begin{assumption}[Unpopular items] \label{assump:unpopular}
The popularity of all unpopular items is upper bounded by a constant $K$.
That is, for all $n$, $p_j \leq \sqrt{K}$ for any $j > M_n$.
\end{assumption}

Then, we show that PCA on the matrix $X_n$ using the top $M_n$ principal components admits the two undesirable mechanisms.
Let $I_{n, M_n} \in \bR^{m(n) \times m(n)}$ be the matrix where all entries are zero except for the first $M_n$ diagonal entries, which are 1. The proof for \cref{thm:unfair} is in \cref{sec:existence-thm-proof}.
\begin{theorem} \label{thm:unfair}
Let $P_n \in \bR^{m_n \times m_n}$ be the projection matrix given by performing PCA on matrix $X_n$, taking the largest $M_n$ principal components.
Then, $||P_n - I_{n, M_n}||_F \to 0$ as $n \to \infty$.
\end{theorem}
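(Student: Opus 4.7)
The plan is to exhibit $X_n$ as a small perturbation of a matrix whose top-$M_n$ right singular subspace is exactly the coordinate subspace spanned by the first $M_n$ standard basis vectors, and then invoke a Davis--Kahan / Wedin $\sin\Theta$-type perturbation bound to transfer this to $X_n$ itself. Concretely, write $X_n = X_n' + E_n$, where $X_n'$ is the matrix from \cref{assump:popular} (popular columns retained, unpopular columns zeroed) and $E_n := X_n - X_n'$ has only unpopular columns. By \cref{assump:unpopular}, each unpopular column has squared norm at most $K$, so
\begin{equation*}
\|E_n\|_F^2 \;=\; \sum_{j > M_n} \|X_{n,\cdot j}\|_2^2 \;\le\; K(d_n - M_n) \;\le\; K d_n \;=\; o(n),
\end{equation*}
and in particular $\|E_n\|_{\mathrm{op}} \le \|E_n\|_F = o(\sqrt{n})$.

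Next I would show that the top-$M_n$ right singular subspace of $X_n'$ is exactly the span of $e_1,\dots,e_{M_n}$, so that the associated projection $P_n'$ equals $I_{n,M_n}$. Since only the first $M_n$ columns of $X_n'$ are nonzero, its row space lies in that coordinate subspace, so $\mathrm{rank}(X_n') \le M_n$. Combined with $s_{M_n}(X_n') = \Omega(\sqrt{n}) > 0$ from \cref{assump:popular}, the rank is exactly $M_n$, and the top $M_n$ right singular vectors span the full row space, which equals $\mathrm{span}(e_1,\dots,e_{M_n})$.

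Finally I would apply Wedin's $\sin\Theta$ theorem to compare the top-$M_n$ right singular subspaces of $X_n'$ and $X_n = X_n' + E_n$. Because $X_n'$ has rank $M_n$, $s_{M_n+1}(X_n') = 0$, and by Weyl's inequality $s_{M_n+1}(X_n) \le \|E_n\|_{\mathrm{op}} = o(\sqrt{n})$, so the relevant singular value gap is
\begin{equation*}
\delta_n \;\ge\; s_{M_n}(X_n') - s_{M_n+1}(X_n) \;=\; \Omega(\sqrt{n}) - o(\sqrt{n}) \;=\; \Omega(\sqrt{n}).
\end{equation*}
Wedin's bound then gives $\|\sin\Theta\|_F \le C \|E_n\|_F / \delta_n = O(\sqrt{d_n/n}) \to 0$, and using the standard identity $\|P_n - P_n'\|_F^2 = 2\|\sin\Theta\|_F^2$ for projection matrices of equal rank, we conclude $\|P_n - I_{n,M_n}\|_F \to 0$. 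The main delicate point is verifying the singular-value gap in the form needed by Wedin's theorem, i.e., ensuring that the competing perturbed singular value $s_{M_n+1}(X_n)$ really is $o(\sqrt{n})$ despite $X_n$ potentially having large rank; this is precisely where the scalings $d_n = o(n)$ and the constant-sized unpopular columns combine to make the argument work.
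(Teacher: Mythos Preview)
Your proposal is correct and follows essentially the same route as the paper: decompose $X_n = X_n' + E_n$, identify the top-$M_n$ right singular subspace of $X_n'$ with $\mathrm{span}(e_1,\dots,e_{M_n})$, and apply a $\sin\Theta$ perturbation bound. The only cosmetic difference is that the paper applies the Davis--Kahan variant of \citet{yu2015useful} to the covariance matrices $C_n = X_n^\top X_n$ and $C_n' = X_n'^\top X_n'$ (bounding $\|C_n - C_n'\|_F \le K d_n$ entrywise and using the eigenvalue gap $\lambda_{M_n}(C_n') = \Omega(n)$), whereas you apply Wedin directly to the singular subspaces of $X_n$ and $X_n'$; the two are equivalent and yield convergence rates $O(d_n/n)$ and $O(\sqrt{d_n/n})$ respectively, both of which vanish.
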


\cref{thm:unfair} states that as the number of users increases, the projection matrix outputted by PCA with $M_n$ components converges to the $I_{n, M_n}$ matrix.
The projection matrix being $P = I_{n, M_n}$ demonstrates both undesirable mechanisms.
The proof makes use of the Davis-Kahan theorem from perturbation theory, which can be found in the Appendix.

Firstly, all columns $j > M_n$ that represent the unpopular items are the 0 vector in the projection matrix; i.e. the projection does not contain \textit{any} information about item $j$.
Then, the reconstruction, $\hat{X}_{.j}$ will also be the 0 vector; that is, the reconstructed preference of every user to every unpopular item is outputted to be 0.

Next, fix a popular item $j \leq M_n$.
Then, column $j$ of the projection matrix approaches $e_j$, the unit vector with 1 in the $j$'th entry.
Then, the reconstruction of the preference of user $i$ for item $j$, $\hat{X}_{ij}$, is exactly $X_{ij}$. 
That is, the reconstruction for the $(i, j)$'th entry simply ``reads'' the value that was there in the original matrix.
This provides a perfect reconstruction, but this provides no useful information in the context of collaborative filtering.
The reconstruction only provides non-zero values to entries that already existed in the original matrix, which does not serve the purpose of using this method as a recommendation tool.
A projection matrix that is useful for recommendations should contain many non-zero entries for column $j$: then, the preference of user $i$ towards item $j$ can be inferred through the existing preferences of user $i$ towards \textit{other} items $j' \neq j$.
\section{Our Proposed Algorithm: Item-Weighted PCA}\label{sec:algorithm}
We propose an algorithm dubbed \algname{} that counters the unfairness mechanisms introduced in the previous section by up-weighting less popular items. 
We first formally state the problem we aim to solve, and then present \algname{} as an algorithm that efficiently and exactly solves the problem.
We show that PCA is an instance of \algname{}, as is Normalized PCA (a common re-weighting baseline) for a subset of preference matrices. Moreover, \algname{} offers a way to interpolate between the two baselines. On a class of stylized matrices, we show there exist input preference matrices $X$ such that interpolation is needed.

\subsection{Algorithm Description}
\subsubsection{Problem Statement.}
The unfairness mechanisms in \cref{sec:unfairness_of_pca} show that specialization on popular items ensures that preferences for highly popular items are preserved in $\hX$ while preferences for low popularity items are lost. As such, to mitigate popularity-specialization, and in turn increase collaboration, we aim to ensure preferences for low popularity items are reflected in $\hX$ even in the low-rank regime. 

Formally, let $X \in \mathbb{R}^{n \times m}$ be an input matrix, where entries denote preference scores and missing values are set to zero, and
$r \ll \min\{n, m\}$ be a rank parameter.
Let $w_j \geq 0$ for $j \in [m]$ be item-specific weights that are input parameters.
We aim to solve the following problem to uncover principal components $U$ that account for low-popularity items:
\begin{align} 
    \argmax_{U\in \mathbb{R}^{m \times r}}  \quad&\sum_{j=1}^m w_j \;
    \langle  X_{.j}, \hX_{.j} \rangle \label{eqn:fair-pca-obj}\\
    & \text{s.t.} \quad U^TU = I  \label{eqn:fair-pca-constraints}
\end{align}
where $\hX = XUU^T$.

The above problem is posed in response to the unfairness mechanisms identified in PCA. Suppose user $i$ has a high preference score for item $j$. While in PCA, the error $\left(X_{ij} - \hX_{ij}\right)^2$ is only one of $n^2$ terms in the global reconstruction error, we now aim to maximize $\left(w_j X_{ij}\right) \hX_{ij}$, which incentivizes a large prediction score $\hX_{ij}$. Intuitively, up-weighting the prediction scores for low-popularity items induces collaboration because if $P$ is low-rank, the projection matrix cannot ``afford'' to dedicate one component to a single item. 

Note that the weights $w_j$ must be given as input. To ensure the weights uplift less popular items, we define the weight for item $j$ in terms of its popularity $p$:
\begin{equation}\label{eqn:weight-def}
    w_j = p_j^{\gamma}
\end{equation}

In all of our experiments, we set $\gamma=-1$, where we show in \cref{sec:theory_item_weights} that setting $\gamma=-1$ serves as an interpolation between two PCA baselines. 

\subsubsection{Algorithm.}\label{sec:alg-subsection}
Our goal is to design an algorithm that efficiently solves the item-re-weighting problem in \eqref{eqn:fair-pca-obj}-\eqref{eqn:fair-pca-constraints}. Specifically, our desiderata are that the algorithm solves a convex optimization problem and runs in polynomial time. As discussed in \cref{sec:related_work}, it is difficult to perform weighted matrix factorization while enforcing convexity. For instance, re-weighting the PCA loss function as $L = \sum_{j=1}^m w_j \|X_{.j} - \hX_{.j}\|^2$ violates convexity (see \cref{sec:trivial-re-weighting} for a precise claim and proof).

As such, we propose the algorithm \algname{}, which solves \eqref{eqn:fair-pca-obj}-\eqref{eqn:fair-pca-constraints} while respecting both optimization desiderata. Our approach begins by relaxing the feasible set. Instead of constraining to projection matrices $P=UU^T$, \algname{} relaxes to optimize over positive semi-definite matrices (PSD) with bounded trace and eigenvalues and solves for an extreme-point optimal solution to the following Semi-Definite Program (SDP):

\begin{align} 
    \begin{split}
        \argmax_{P\in\bR^{m \times m}}  \quad&\sum_{j=1}^m w_j \; 
        \langle  X_{.j}, \hX_{.j} \rangle\\
        \text{s.t.} \quad  
        & \text{tr}\left(P\right) \leq r, 0 \preceq P \preceq 1 
    \end{split}\tag{\algname{}}
\end{align}

Observe that the set of PSD matrices with trace $\leq r$ and eigenvalues $\in [0, 1]$ is a superset of rank $r$ projection matrices. In \cref{thm:comp_efficient}, we claim that the PSD convex relaxation is tight in that the solution to \algname{} is also the solution to the item re-weighting problem, further \algname{} yields a polynomial-time algorithm for the optimization problem of \eqref{eqn:fair-pca-obj}-\eqref{eqn:fair-pca-constraints}. The proof is in \cref{sec:efficiency-thm-proof}.

\begin{theorem} \label{thm:comp_efficient}
\algname{} solves a convex optimization problem and is a polynomial-time algorithm to solve the optimization problem of \eqref{eqn:fair-pca-obj}-\eqref{eqn:fair-pca-constraints}.
\end{theorem}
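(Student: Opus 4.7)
The plan is to prove the theorem in three steps: (i) the SDP \eqref{eqn:item-preference-obj}-\eqref{eqn:item-preference-constraints} is a convex relaxation of \eqref{eqn:fair-pca-obj}-\eqref{eqn:fair-pca-constraints} with a linear objective; (ii) every extreme-point optimum of the SDP is a projection matrix and thus feasible for the original program; and (iii) such an extreme point can be produced in polynomial time.

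First I would rewrite the objective as a linear form in $P$. Using $\hat{X} = XP$, a direct expansion gives $\sum_{j} w_j \langle S_{.j}, \hat{X}_{.j}\rangle = \sum_{k,j} w_j (X^\top S)_{kj} P_{kj} = \text{tr}(B^\top P)$ with $B_{kj} = w_j (X^\top S)_{kj}$; since $P$ is symmetric only the symmetric part of $B$ matters. Every feasible $P = UU^\top$ of the original program has eigenvalues in $\{0,1\}$ and trace exactly $r$, so it lies in the SDP feasible set $\mathcal{F}_r = \{P : 0 \preceq P \preceq I,\ \text{tr}(P) \leq r\}$. Hence the SDP is a valid relaxation and its value upper-bounds the original optimum.

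The key technical step is the extreme-point characterization of $\mathcal{F}_r$: every extreme point of $\mathcal{F}_r$ has eigenvalues only in $\{0,1\}$, i.e., equals $UU^\top$ for some $U$ with $U^\top U = I_k$ and $k \leq r$. I would prove this by a spectral perturbation argument. Suppose $P = Q\Lambda Q^\top \in \mathcal{F}_r$ has an eigenvalue $\lambda_i \in (0,1)$. If $\text{tr}(P) < r$, the perturbation $P \pm \varepsilon Q e_i e_i^\top Q^\top$ stays in $\mathcal{F}_r$ for small $\varepsilon > 0$, so $P$ is not extreme. If $\text{tr}(P) = r$, integrality of $r$ forces a second fractional eigenvalue $\lambda_{i'} \in (0,1)$, and the trace-preserving perturbation $P \pm \varepsilon Q(e_i e_i^\top - e_{i'} e_{i'}^\top) Q^\top$ witnesses non-extremity. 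Because the objective is linear and $\mathcal{F}_r$ is compact and convex, Bauer's maximum principle guarantees that the maximum is attained at an extreme point, which by the above is a projection matrix of rank $k \leq r$. Padding with $r - k$ orthonormal directions on which the symmetric part of $B$ vanishes produces a rank-$r$ projection matrix feasible for the original problem and attaining the SDP upper bound, hence optimal for both.

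For polynomial runtime, the SDP has dimension $O(d^2)$ with a linear objective and standard linear-matrix-inequality constraints, so an interior-point method (Nesterov-Nemirovski) computes an optimum to arbitrary accuracy in time polynomial in $d$ and the input bit length. If the returned solution $\tilde{P}$ is not an extreme point, a standard purification procedure---diagonalize $\tilde{P}$, identify a fractional eigenvalue, and move along the feasibility-preserving directions constructed above until a new face of $\mathcal{F}_r$ is hit---extracts an extreme point in at most $d$ rounds. The main obstacle in the argument is the spectral extreme-point lemma; once that is in hand, linearity of the objective and standard SDP machinery deliver the rest.
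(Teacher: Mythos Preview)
Your proposal is correct and follows essentially the same route as the paper: show the SDP is a relaxation, prove via spectral perturbation that every extreme point of $\mathcal{F}_r$ has $\{0,1\}$ eigenvalues (splitting into the single-fractional and double-fractional cases exactly as the paper does), and then argue polynomial-time solvability. Where the paper cites Theorem~1.8 of \cite{fair-pca-multi-criteria} for extracting an extreme-point optimum, you instead spell out an interior-point solve followed by an explicit purification, which is the same content unpacked.

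One minor caveat: your padding step asserts that one can add $r-k$ orthonormal directions ``on which the symmetric part of $B$ vanishes'' so that the padded matrix still attains the SDP value. Such null directions need not exist; if the symmetrized coefficient matrix $C=\tfrac12(B+B^\top)$ has fewer than $r$ nonnegative eigenvalues, the SDP optimum has rank $k<r$ and any rank-$r$ projection has strictly smaller objective than the SDP value. The clean fix is to note that the original problem \eqref{eqn:fair-pca-obj}--\eqref{eqn:fair-pca-constraints} is itself $\max\{\text{tr}(CP): P \text{ a rank-}r \text{ projection}\}$, whose solution is the projector onto the top-$r$ eigenvectors of $C$; so padding the SDP extreme point with the next $r-k$ eigendirections of $C$ solves the original problem even when it does not meet the SDP upper bound. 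The paper's own proof has the same loose end---it concludes ``rank at most $r$'' and then writes $U^\top U=I_r$ without further comment---so this is a shared technicality rather than a defect specific to your argument.
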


\algname{}'s objective \eqref{eqn:fair-pca-obj} does not aim to reconstruct the original matrix $X$.
However, it is possible to add constraints to enforce a small error if desired.
Suppose $E_r = ||\hX^{\text{PCA}} - X||_F^2$ is the reconstruction error of the PCA solution (which is the smallest possible reconstruction error).
Then, one can add a constraint to the optimization \eqref{eqn:fair-pca-obj}-\eqref{eqn:fair-pca-constraints} of the form 
$||\hX - X||_F^2 \leq (1+\epsilon) E_r$
for some parameter $\epsilon > 0$, so that the reconstruction error of the output $\hX$ is at most a $(1+\epsilon)$ factor of $E_r$.

In \cref{sec:efficiency-thm-proof}, we also show that \cref{thm:comp_efficient} holds with the added constraint.

\subsubsection{Theoretical Comparison with Baselines}\label{sec:theory_item_weights}
We compare \algname{} against PCA and Normalized PCA, two baselines. Henceforth we will use ``Vanilla PCA'' interchangeably with PCA to distinguish from \algname{}. Normalized PCA is a trivial baseline mitigating popularity heterogeneity, involving column normalization before applying PCA. Specifically, define $X' = XD_X^{-1}$, where the diagonal values of $D_X$ are the Euclidean norms of the columns of $X$. Then let $\hX'$ be the Vanilla PCA approximation of $X'$ and utilize $\hX'$ for recommendations.

Now, we contextualize the flexible re-weighting \algname{} offers by showing that Vanilla PCA is a special case of \algname{}, as is Normalized PCA in the case of item-regular block-diagonal matrices, which are defined below. Further, there exists a class of matrices where interpolating between Vanilla and Normalized PCA is necessary.

We define an item-regular block-diagonal preference matrix $X$ as a matrix where items can be partitioned into $K$ mutually exclusive sets. Each user interacts only with items in one set, and all items in set $k$ have popularity $p_k$. We use this class of matrices to contextualize \algname{} in \cref{thm:instantiations}, which is proven in \cref{sec:instantiations-thm-proof}. 

\begin{theorem}\label{thm:instantiations}
    For all preference matrices $X \in \bR_+^{n \times m}$, Vanilla PCA is equivalent to \algname{} with $\gamma=0$. For item-regular block-diagonal matrices, Normalized PCA is equivalent to \algname{} with $\gamma=-2$.
\end{theorem}

From \cref{thm:instantiations}, we can view setting $\gamma=-1$ as an interpolation between Vanilla and Normalized PCA, where low-popularity items are up-weighted compared to PCA but not up-weighted as significantly as Normalized PCA. In \cref{sec:optimality-over-baselines}, we show interpolation is important because beyond a given popularity gap, Normalized PCA overcompensates and overweights unpopular items. 

We emphasize that while we utilize binary interaction matrices to theoretically contextualize \algname{} in this section, \algname{} is applicable to any input preference matrix $X \in \bR_+^{n \times m}$.
\section{Experiments} \label{s:experiments}
We demonstrate that on two real-world recommender systems datasets, \algname{} reduces specialization and increases collaboration while also improving downstream recommendation performance for users. In \cref{sec:dataset-description} we summarize the datasets and pre-processing methodology used; in \cref{sec:eval-method} we introduce two collaboration evaluation metrics; and in \cref{sec:results} we present our evaluation results comparing \algname{} against the various PCA baselines.

\subsection{Datasets}\label{sec:dataset-description}
We utilize the lastfm-2k and movielens-1M datasets which are both common recommender systems evaluation datasets. Below, we summarize the contents of the preference matrices $X$ as well as our pre-processing. The dataset summary statistics follow pre-processing are reported in \cref{tab:datasets}

\underline{\textit{LastFM}}: We use the lastfm-2k dataset \cite{lastfm} of user listening counts. 
$X_{ij}$ is set to the number of times user $i$ listened to artist $j$ if artist $j$ is one of user $i$'s top-50 most-listened artists. Otherwise, $X_{ij} = 0$. We filter the dataset to keep only 
users with at least $10$ listening counts and then artists with at least $10$ top listeners among the remaining users, leaving a $1,867 \times 1,529$ matrix. We row normalize the listening counts for all users by dividing each row by its sum. The normalization accounts for the fact that users differ in how much they utilize the LastFM platform.

\underline{\textit{MovieLens}}: We use the MovieLens-1M dataset in which users provide ratings for movies on a scale from $1$-$5$ \cite{movielens}. We uniformly randomly sample $1200$ artists from the original dataset and also define $X$ as an interaction matrix such that $X_{ij} = 1$ if user $i$ left a rating for movie $j$. 

\begin{table}[ht]
\centering
\Description{Table summarizing the LastFM and MovieLens datasets used in our experiments.}
\caption{Summary statistics of datasets after pre-processing.}
\begin{tabular}{l c c c c }
\toprule
\textbf{Dataset} & \textbf{\# Users} & \textbf{\# Items} & \textbf{\# Entries} & \textbf{Entry Type} \\ \midrule
LastFM & 1,867 & 1,529 & 62,975 & (0, 1) \\ 
MovieLens & 6,040 & 1,200 & 346,918 & \{0, 1\} \\ 
\bottomrule
\end{tabular}
\label{tab:datasets}
\end{table}

\begin{figure*}[ht]
     \centering
     \begin{subfigure}[b]{1.0\linewidth}
         \centering
         \includegraphics[width=\textwidth]{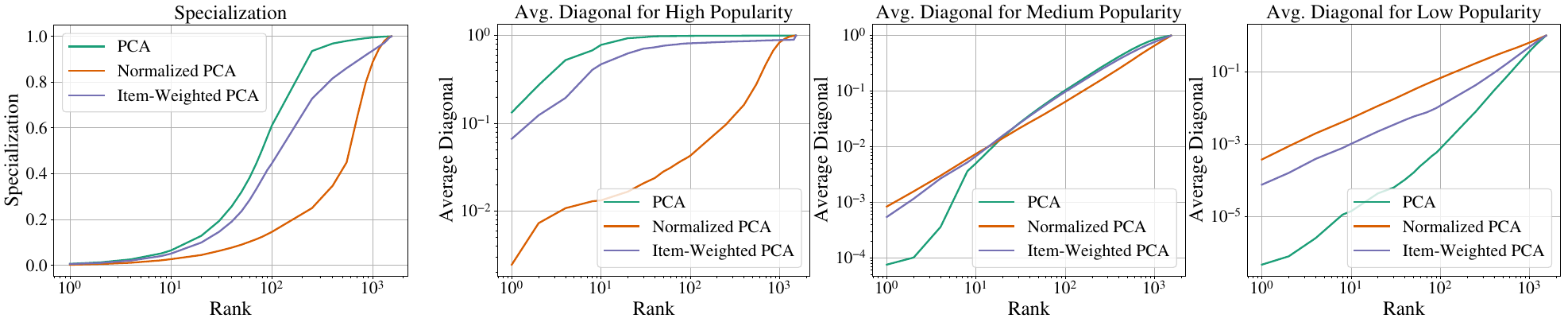}
         \caption{LastFM}
         \label{fig:lastfm-diag}
     \end{subfigure}
     \begin{subfigure}[b]{1.0\linewidth}
         \centering
         \includegraphics[width=\textwidth]{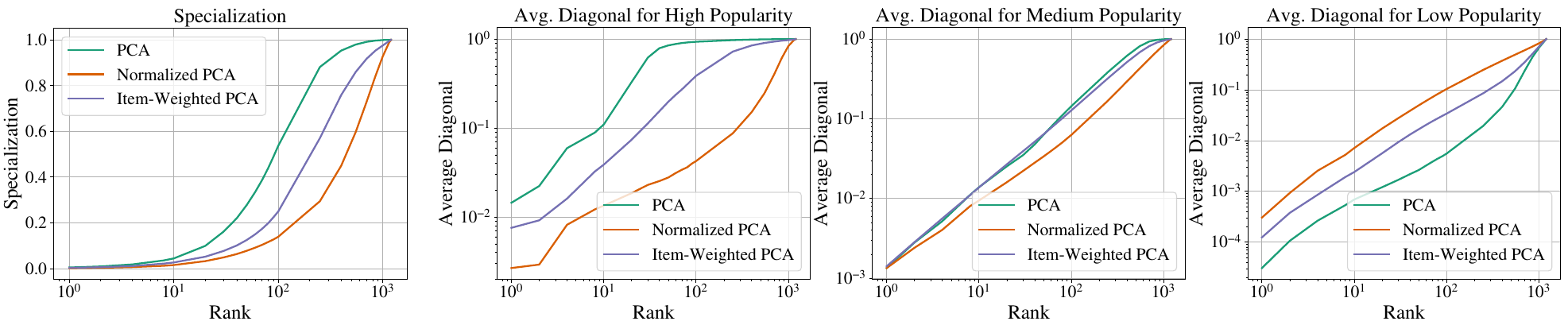}
         \caption{MovieLens}
         \label{fig:movielens-diag}
     \end{subfigure}
        \Description{Plots showing overall specialization and specialization levels for high, medium, and low popularity items in LastFM and Movielens. For each dataset, there are four (overall, high, medium, and low) line plots. Each line plot shows the specialization (y-axis) of PCA, Normalized PCA, and Item-Weighted PCA as a function of rank.}
        \caption{\algname{} reduces the unfairness mechanisms identified in Vanilla PCA in which leading components specialize in individual popular items. The left plots show that \algname{} reduces our specialization metric relative to Vanilla PCA, and the right plots show the diagonal entries for high-popularity items, in particular, decreases. As expected, \algname{} interpolates between Vanilla and Normalized PCA in our in-sample evaluation.}
        \label{fig:item-pref-diag}
\end{figure*}

\subsection{Evaluation Methodology}\label{sec:eval-method}
We compare \algname{} against the baselines of PCA and Normalized PCA. In the remainder of our results section, we call any algorithm that yields a projection matrix for approximating $X$ a ``PCA algorithm'', where Vanilla PCA is the optimal solution to Equation \eqref{eqn:vanilla-pca-objective}. 
Normalized PCA performs column normalization before applying Vanilla PCA.
Throughout we also analyze performance by popularity groups. We define ``high'' popularity as the top $10\%$ most-popular items, ``low'' as the bottom $10\%$, and ``medium'' as all other items. As before, the popularity of an artist $j$ is $\|X_j\|_2$.

\subsubsection{Metrics}
To assess whether \algname{} mitigates the identified unfairness mechanisms -- decreasing specialization and increasing collaboration -- as well as the impact on downstream user recommendations, we introduce three categories of evaluation metrics: \emph{specialization}, \emph{collaboration}, and \emph{downstream recommendation performance}. We describe each of these next.

\paragraph{Specialization.} Building on the unfairness mechanisms introduced in \cref{sec:unfairness_of_pca}, we define specialization as the average of the top-10\% largest diagonal entries of the projection matrix $P$. Though our identified unfairness mechanism focuses on specialization for popular items, in general, specialization can occur for all items so the proposed metric is agnostic to popularity.

\paragraph{Collaboration.} Intuitively, we define high collaboration as the setting where the item-item similarity values in the projection matrix $P$, specifically the off-diagonal values, are useful for recovering user preferences. Specifically, let $P$ be a rank $r$ projection matrix from a PCA algorithm. To evaluate the degree of collaboration exhibited in $P$, let us define $P'$ as $P$ with all diagonal entries set to zero, as diagonal entries correspond to specialization. Then, for an item $j$, the user prediction scores are $\left(XP'\right)_j$. Zeroing out the diagonal forces the prediction scores to be based on the user's ratings for items that are similar to item $j$. 

To evaluate, we perform an \emph{in-sample} analysis and assess the degree to which $XP'$ recovers preferences \emph{already observed} in $X$. Below, we introduce two ranking-based metrics that evaluate whether user-item pairs with interactions seen during training are ranked more highly in $XP'$ than those without interactions.
\begin{equation} \label{eqn:recommender-item-auc-score}\\
    \frac{1}{m} \sum_{j=1}^m \texttt{AUC}\left(XP'_j, \mathbf{y}_j\right) \tag{In-Sample Item AUC-ROC}
\end{equation}
\begin{equation} \label{eqn:recommender-item-precision-score}\\
    \frac{1}{m} \sum_{j=1}^m \texttt{Precision}\left(XP'_j, \mathbf{y}_j, k\right) \tag{In-Sample Item Precision@$k$}
\end{equation}
In the above evaluation metrics, $\mathbf{y}_j$ represents the true binary labels for both datasets, where $\mathbf{y}_j = X_j > 0$. In our experiments, we set $k=20$ for the Precision@$k$ metric. 

\paragraph{Downstream Recommendation Performance.} We utilize standard recommender system methodology and evaluate the quality of user recommendations. We maintain a $20\%$ holdout interaction set and evaluate recommendations by Recall@$k$, Precision@$k$, Normalized Discounted Cumulative Gain (NDCG@$k$), and Mean Reciprocal Rank (MRR@$k$). Our methodology mirrors that of \citet{he2020lightgcn} and in the following results, we set $k=20$.

We emphasize that the collaboration evaluation metrics introduced evaluate the degree to which preferences expressed in the \emph{training} set are preserved in $\hX$, hence the ``In-Sample'' naming. High collaboration does not necessarily imply high downstream user performance on the holdout set.\footnote{We ran all of our experiments in Python on a machine with Intel Xeon E5-2690 CPUs, 2.60 GHz, 30 MB of cache. Our code is available at \href{https://github.com/dliu18/fair-cf}{https://github.com/dliu18/fair-cf}.} 

\subsection{Results}\label{sec:results}
We present our results in two phases. First, we present an in-sample analysis of \algname{} and show that our algorithm decreases specialization and increases collaboration compared to Vanilla PCA. Second, we show that \algname{} also improves out-of-sample recommendation performance for users according to standard recommender-system evaluation metrics. Of note, on in-sample collaboration evaluations, \algname{} does not decrease specialization and increase collaboration as much as Normalized PCA, reflecting the interpolation between Vanilla and Normalized PCA. However, we show that \algname{} yields stronger down-stream performance than Normalized PCA when evaluating on out-of-sample data. In \cref{sec:all_baselines}, we also include downstream performance results for more advanced baselines such as weighted matrix-factorization and deep recommender systems; \algname{} yields downstream performance comparable to that of more advanced baselines.
\subsubsection{Reduced Specialization}
In \cref{fig:item-pref-diag}, we show that relative to Vanilla PCA, \algname{} decreases specialization; at $r=30$, specialization decreases by $24.3$\% for LastFM and $52.1$\% for MovieLens. Recall that our specialization metric is agnostic to popularity, however, the right plots in \cref{fig:item-pref-diag} confirm that specialization is indeed centered on high-popularity items given that they have the largest diagonal entries, on average. For both datasets, \algname{} decreases the diagonal entries for the high-popularity items.
As expected, Normalized PCA decreases specialization more than \algname{} because low-popularity items receive an even larger up-weight in the objective function.
\begin{figure*}[ht]
     \centering
     \begin{subfigure}[b]{1.0\linewidth}
         \centering
         \includegraphics[width=\textwidth]{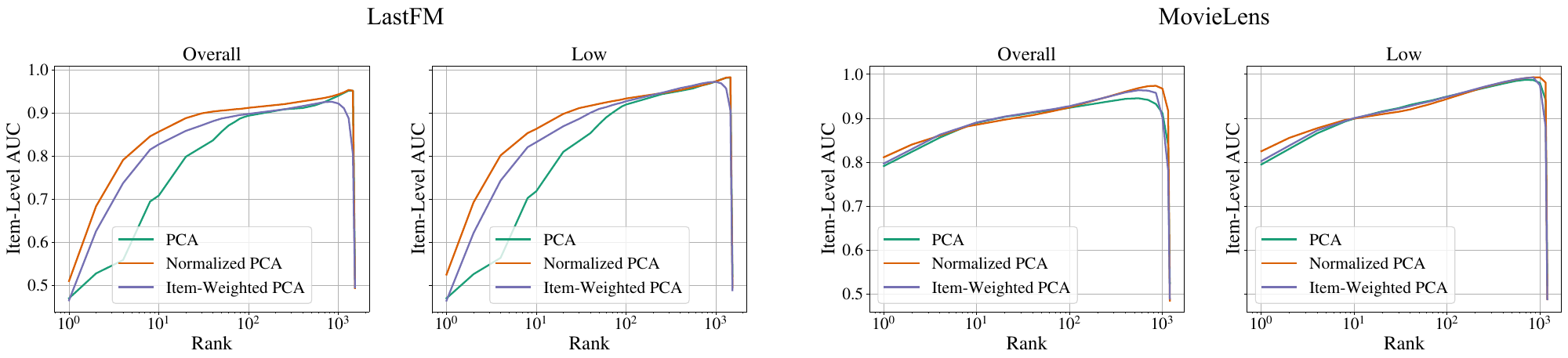}
         \caption{In-Sample Item AUC-ROC}
         \label{fig:in-sample-auc}
     \end{subfigure}
     \hfill
     \begin{subfigure}[b]{1.0\linewidth}
         \centering
         \includegraphics[width=\textwidth]{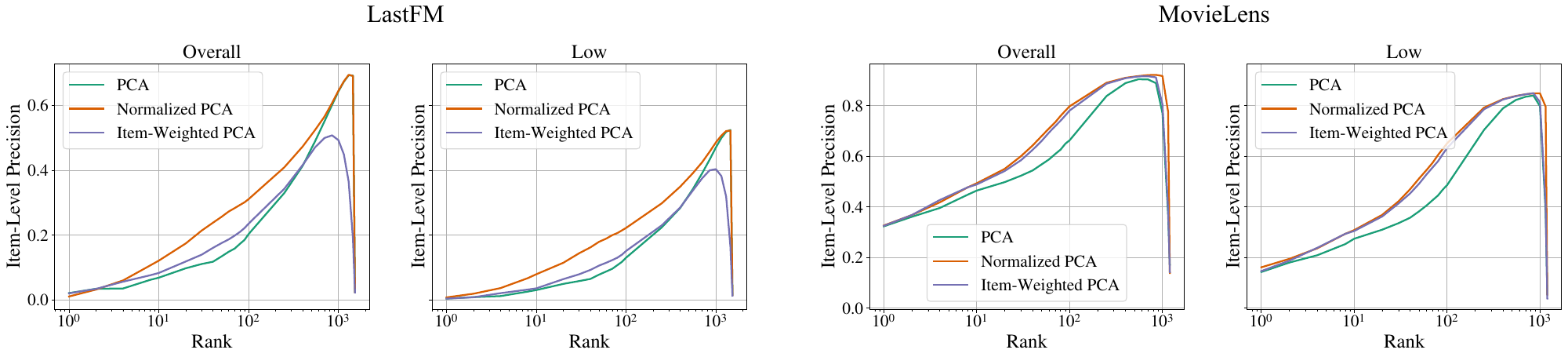}
         \caption{In-Sample Item Precision@$k$}
         \label{fig:in-sample-precision}
     \end{subfigure}
    \Description{Plots showing the amount of collaboration for Item-Weighted PCA and the two baselines. There are two subfigures. The first is collaboration defined with In-Sample Item AUC-ROC and the second is collaboration defined with In-Sample Item Precision@k. For each metric and dataset, there are two line plots, one for overall collaboration and another for collaboration for low-popularity items.}
    \caption{Compared to Vanilla PCA, \algname{} also increases collaboration according to our in-sample evaluation metrics of Item AUC-ROC and Precision@k. The metrics report whether $P$ contains useful item-item similarities for recovering user preferences. As in \cref{fig:item-pref-diag}, the in-sample performance for \algname{} is sandwiched between the two baselines since our method interpolates between the two. In the high-rank regime, collaboration dramatically decreases because the projection matrix can afford to specialize, and off-diagonal entries of $P$, the collaborative entries, approach zero.}
    \label{fig:insample}
\end{figure*}
\subsubsection{Increased Collaboration.}  
\cref{fig:insample} shows that \algname{} increases item collaboration according to our in-sample AUC and Precision@$k$ metrics. For both metrics, we present the average collaboration values (y-axis) across all items (``Overall'') as well as the average values for the low-popularity items (``Low'').  
The curves for all algorithms decrease for large values of $r$ because our evaluation metric zeros out the diagonal, and for large values of $r$ collaborative filtering is not needed as $P$ approaches $I$. Thus, the collaborative value of $P'$ is most relevant in the low-rank regime.

The collaboration improvement for LastFM is most noticeable with the In-Sample Item AUC-ROC metric. For example for $r=30$, the overall In-Sample Item AUC-ROC increases by $6.2$\%, and for low-popularity items, the In-Sample Item-AUC-ROC increases by $6.1$\% with respect to the Vanilla PCA baseline. For MovieLens, the improvement is most noticeable with In-Sample Item Precision@$k$; at $r=30$, the overall In-Sample Item Precision@$k$ increases by $11.7$\% and the In-Sample Item Precision@$k$ increases by $23.1$\% for low-popularity items. While the Normalized PCA baseline has stronger in-sample performance for LastFM, since \algname{} interpolates between the baselines, in MovieLens, \algname{} and Normalized PCA have nearly identical in-sample collaboration performance.
\subsubsection{Improved downstream performance for users.} 
\begin{figure}[ht]
     \centering
     \begin{subfigure}[b]{1.0\linewidth}
         \centering
         \includegraphics[width=\textwidth]{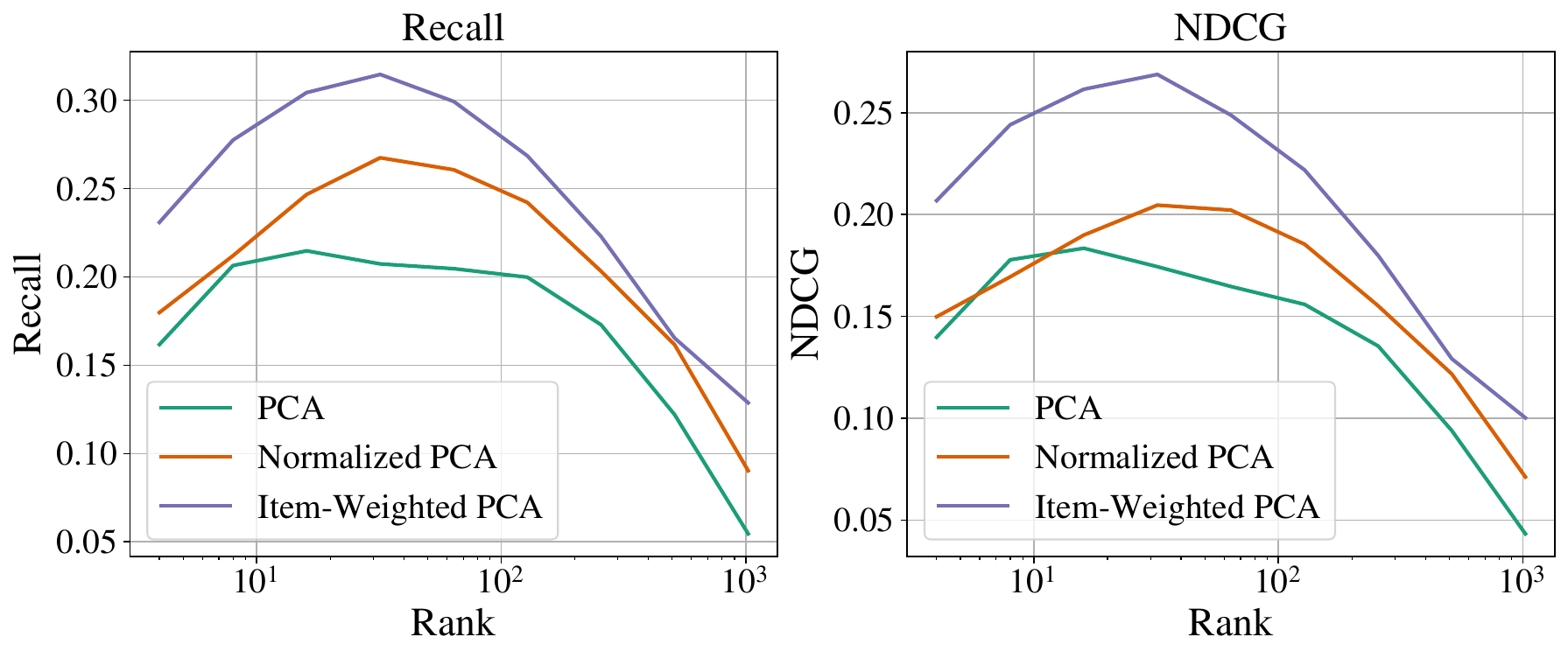}
         \caption{LastFM}
         \label{fig:lastfm_missing_data}
     \end{subfigure}
     \begin{subfigure}[b]{1.0\linewidth}
         \centering
         \includegraphics[width=\textwidth]{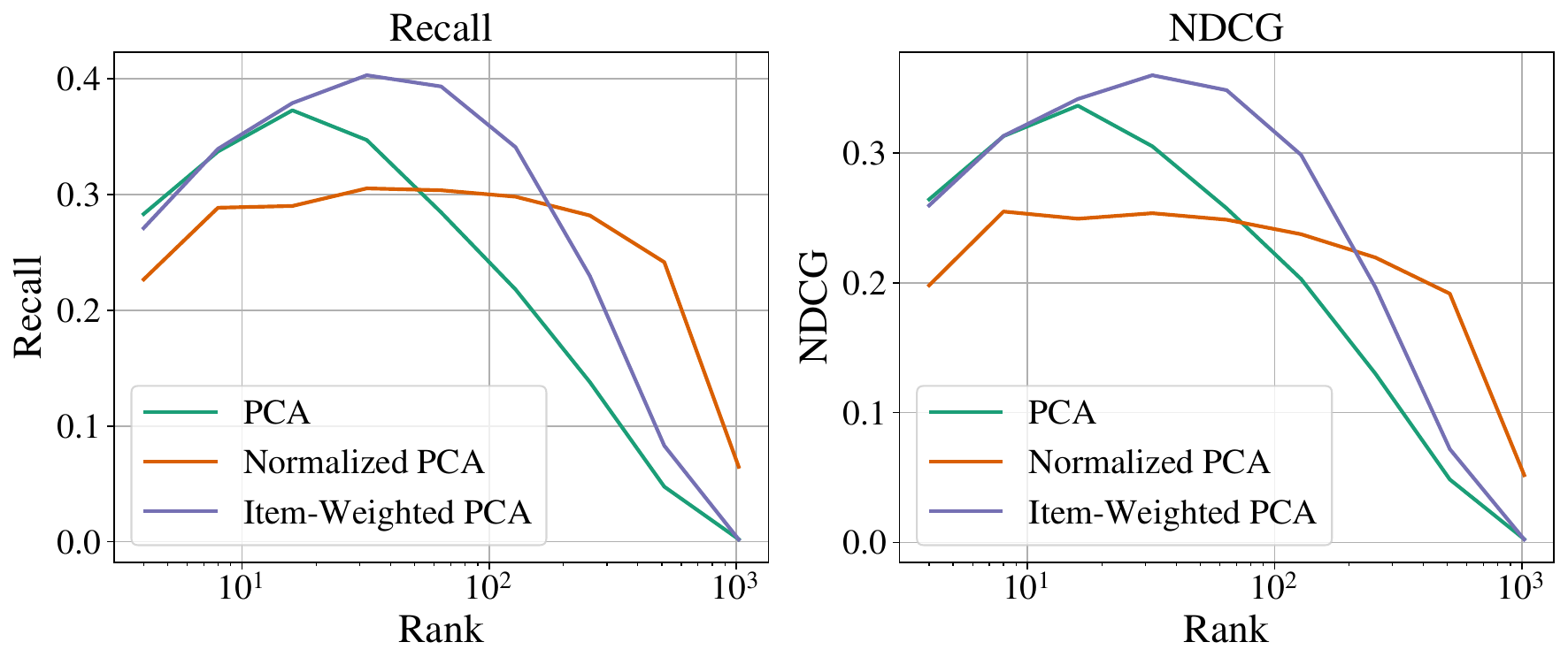}
         \caption{MovieLens}
         \label{fig:movielens_missing_data}
     \end{subfigure}
        \Description{Figure summarizing the downstream performance of Item-Weighted PCA and the two baselines. For each dataset, there are line plots for Recall@k and NDCG@k. In all cases, the x-axis is the rank parameter.}
        \caption{On standard out-of-sample user recommendation metrics, \algname{} improves peak (among all evaluated values of $r$) recommendation performance on Recall@$20$ and NDCG@$20$. While \algname{} demonstrates less in-sample collaboration than Normalized PCA, the downstream performance is stronger, especially in MovieLens. This result demonstrates that \algname{} balances in-sample collaboration with downstream performance. Results for Precision@$k$ and MRR@$k$ are shown in \cref{sec:all_metrics}, and a comparison with additional baselines is included in \cref{sec:all_baselines}.}
        \label{fig:downstream}
\end{figure}
We transition to evaluating the out-of-sample performance of our algorithm to investigate whether the in-sample collaboration improvements translate to downstream benefits. Consistent with past work on recommender systems, we evaluate out-of-sample performance at the user level with the following metrics: Recall@$20$, Precision@$20$, NDCG@$20$, and MRR@$20$. \cref{fig:downstream} shows that \algname{} improves peak user-recommendation performance compared to both PCA baselines on Recall@$20$ and NDCG@$20$. For instance, at $r=32$, the peak in most curves in \cref{fig:downstream}, \algname{} increases Recall@$20$ by $51.7$\% relative to Vanilla PCA for LastFM and $16$\% for MovieLens. The recommendation performance decreases in the high-rank regime because $\hX$ overfits to $X$ and does not generalize to the test set. Precision@$20$ and MRR@$20$ also improve, and these results are included in \cref{sec:all_metrics}.

While the \algname{} performance is sandwiched between the two PCA baselines in our in-sample evaluation, in the out-of-sample evaluation \algname{} outperforms both baselines in the low-rank regime. We conjecture that over up-weighting low-popularity items, while beneficial for collaboration, is detrimental to downstream performance.
\algname{} with $\gamma=-1$ balances in-sample collaboration with out-of-sample performance. 
\section{Discussion}\label{sec:discussion}

\paragraph{Societal Implications of PCA's Unfairness Mechanisms}
The two PCA unfairness mechanisms identified in this work have potential negative societal consequences for both users and items. The first mechanism, identifying that less popular items rely on trailing PCA components, suggests that PCA recommendations do not properly capture the preferences of non-mainstream users. For instance, \cref{fig:teaser} shows that the PCA approximation for the LastFM dataset does not preserve expressed interest in less-popular artists while retaining interest in popular artists. The harm to users who prefer less popular items in PCA echoes a broader trend of poorer recommendations for niche-preferring users~\cite{abdollahpouri2021user}. Likewise, for items, the first unfairness mechanisms suggest that less popular items will have less visibility in recommendations, reinforcing the popularity and perhaps monopoly of already popular items. 

The specialization on individual popular items identified in the second unfairness mechanism indicates that there is an imbalance in representation granularity for popular and less popular items. In cases where a minority group predominantly engages with less-popular content, the dichotomy of granular representations for popular items and coarse representations for less popular items parallels recent works showing that machine learning models can portray subordinate groups as homogenous~\cite{lee2024large} and ``flatten'' minority groups~\cite{wang2025large}. While the previous two works focus on large language models, our work shows that the same pattern can persist in recommendations.

\paragraph{Limitations of \algname{}} We emphasize that \algname{} is designed to mitigate unfairness mechanisms of PCA for recommendations; in other applications of PCA, such as data pre-processing, Vanilla PCA may still be desirable. Furthermore, in our experiments, we set $\gamma=-1$ to interpolate between Vanilla and Normalized PCA. We leave tuning the values of $\gamma \in \mathbb{R}_{<0}$ to future work. In \cref{sec:robustness}, we show that the performance of \algname{} indeed varies within $\gamma\in [-2, 0]$. We also leave for future work more efficient approaches to solving the SDP in \algname{}. In \cref{sec:runtime-complexity}, we show that an upper-bound on the runtime of \algname{} is $\mathcal{O}\left(m^{5.5}\right)$, where $m$ is the number of items.

\section{Conclusions}
By analyzing PCA within the context of collaborative filtering and recommendation systems, we identify two mechanisms of unfairness in PCA. First, information relevant to less popular items is lacking in the leading components. Second, the leading components specialize in individual popular items instead of capturing similarities between items. These mechanisms arise from heterogeneity in item popularities and do not require external group labels to analyze. We illustrate the consequences of these mechanisms in a motivating real-world example and show that these mechanisms provably occur in a stylized setting. To mitigate unfairness, we introduce an algorithm \algname{}, which is designed to preserve user preferences for popular and less popular items. \algname{} is optimal in a stylized setting and our evaluations show that \algname{} not only mitigates the two identified unfairness mechanisms, but also improves downstream performance.

\begin{acks}
    This work started while the authors were visiting the Simons Institute for the Theory of Computing. DL was supported by the National Science Foundation's Graduate Research Fellowships Program, and TER was supported in part by the Aoun Chair Endowment at Northeastern University.
\end{acks}
\bibliographystyle{ACM-Reference-Format}
\bibliography{references}


\begin{thebibliography}{40}


\ifx \showCODEN    \undefined \def \showCODEN     #1{\unskip}     \fi
\ifx \showISBNx    \undefined \def \showISBNx     #1{\unskip}     \fi
\ifx \showISBNxiii \undefined \def \showISBNxiii  #1{\unskip}     \fi
\ifx \showISSN     \undefined \def \showISSN      #1{\unskip}     \fi
\ifx \showLCCN     \undefined \def \showLCCN      #1{\unskip}     \fi
\ifx \shownote     \undefined \def \shownote      #1{#1}          \fi
\ifx \showarticletitle \undefined \def \showarticletitle #1{#1}   \fi
\ifx \showURL      \undefined \def \showURL       {\relax}        \fi
\providecommand\bibfield[2]{#2}
\providecommand\bibinfo[2]{#2}
\providecommand\natexlab[1]{#1}
\providecommand\showeprint[2][]{arXiv:#2}

\bibitem[Abdollahpouri et~al\mbox{.}(2017)]%
        {abdollahpouri2017controlling}
\bibfield{author}{\bibinfo{person}{Himan Abdollahpouri}, \bibinfo{person}{Robin Burke}, {and} \bibinfo{person}{Bamshad Mobasher}.} \bibinfo{year}{2017}\natexlab{}.
\newblock \showarticletitle{Controlling Popularity Bias in Learning-to-Rank Recommendation}. In \bibinfo{booktitle}{\emph{RecSys '17}}. \bibinfo{publisher}{ACM}, \bibinfo{address}{New York, NY, USA}, \bibinfo{pages}{42–46}.
\newblock
\showISBNx{9781450346528}
\href{https://doi.org/10.1145/3109859.3109912}{doi:\nolinkurl{10.1145/3109859.3109912}}


\bibitem[Abdollahpouri et~al\mbox{.}(2019)]%
        {abdollahpouri2019managingpopularitybiasrecommender}
\bibfield{author}{\bibinfo{person}{Himan Abdollahpouri}, \bibinfo{person}{Robin Burke}, {and} \bibinfo{person}{Bamshad Mobasher}.} \bibinfo{year}{2019}\natexlab{}.
\newblock \bibinfo{title}{Managing Popularity Bias in Recommender Systems with Personalized Re-ranking}.
\newblock
\showeprint[arxiv]{1901.07555}~[cs.IR]
\urldef\tempurl%
\url{https://arxiv.org/abs/1901.07555}
\showURL{%
\tempurl}


\bibitem[Abdollahpouri et~al\mbox{.}(2021)]%
        {abdollahpouri2021user}
\bibfield{author}{\bibinfo{person}{Himan Abdollahpouri}, \bibinfo{person}{Masoud Mansoury}, \bibinfo{person}{Robin Burke}, \bibinfo{person}{Bamshad Mobasher}, {and} \bibinfo{person}{Edward Malthouse}.} \bibinfo{year}{2021}\natexlab{}.
\newblock \showarticletitle{User-centered Evaluation of Popularity Bias in Recommender Systems}. In \bibinfo{booktitle}{\emph{UMAP '21}}. \bibinfo{publisher}{ACM}, \bibinfo{address}{New York, NY, USA}, \bibinfo{pages}{119–129}.
\newblock


\bibitem[Arora et~al\mbox{.}(2013)]%
        {arora2013stochastic}
\bibfield{author}{\bibinfo{person}{Raman Arora}, \bibinfo{person}{Andrew Cotter}, {and} \bibinfo{person}{Nathan Srebro}.} \bibinfo{year}{2013}\natexlab{}.
\newblock \showarticletitle{Stochastic optimization of PCA with capped MSG}. In \bibinfo{booktitle}{\emph{NIPS '13}}. \bibinfo{publisher}{Curran Associates Inc.}, \bibinfo{address}{Red Hook, NY, USA}, \bibinfo{pages}{1815–1823}.
\newblock


\bibitem[Bailey(2012)]%
        {bailey2012principal}
\bibfield{author}{\bibinfo{person}{Stephen Bailey}.} \bibinfo{year}{2012}\natexlab{}.
\newblock \showarticletitle{Principal Component Analysis with Noisy and/or Missing Data}.
\newblock \bibinfo{journal}{\emph{Publications of the Astronomical Society of the Pacific}} \bibinfo{volume}{124}, \bibinfo{number}{919} (\bibinfo{date}{sep} \bibinfo{year}{2012}), \bibinfo{pages}{1015}.
\newblock


\bibitem[Cantador et~al\mbox{.}(2011)]%
        {lastfm}
\bibfield{author}{\bibinfo{person}{Iv\'{a}n Cantador}, \bibinfo{person}{Peter Brusilovsky}, {and} \bibinfo{person}{Tsvi Kuflik}.} \bibinfo{year}{2011}\natexlab{}.
\newblock \showarticletitle{2nd Workshop on Information Heterogeneity and Fusion in Recommender Systems (HetRec 2011)}. In \bibinfo{booktitle}{\emph{RecSys '11}}. \bibinfo{publisher}{ACM}, \bibinfo{address}{New York, NY, USA}.
\newblock


\bibitem[Chang et~al\mbox{.}(2024)]%
        {chang2024cluster}
\bibfield{author}{\bibinfo{person}{Bo Chang}, \bibinfo{person}{Changping Meng}, \bibinfo{person}{He Ma}, \bibinfo{person}{Shuo Chang}, \bibinfo{person}{Yang Gu}, \bibinfo{person}{Yajun Peng}, \bibinfo{person}{Jingchen Feng}, \bibinfo{person}{Yaping Zhang}, \bibinfo{person}{Shuchao Bi}, \bibinfo{person}{Ed~H. Chi}, {and} \bibinfo{person}{Minmin Chen}.} \bibinfo{year}{2024}\natexlab{}.
\newblock \showarticletitle{Cluster Anchor Regularization to Alleviate Popularity Bias in Recommender Systems}. In \bibinfo{booktitle}{\emph{WWW '24}}. \bibinfo{publisher}{ACM}, \bibinfo{address}{New York, NY, USA}, \bibinfo{pages}{151–160}.
\newblock
\showISBNx{9798400701726}
\href{https://doi.org/10.1145/3589335.3648312}{doi:\nolinkurl{10.1145/3589335.3648312}}


\bibitem[Chen et~al\mbox{.}(2018)]%
        {chen2018my}
\bibfield{author}{\bibinfo{person}{Irene~Y. Chen}, \bibinfo{person}{Fredrik~D. Johansson}, {and} \bibinfo{person}{David Sontag}.} \bibinfo{year}{2018}\natexlab{}.
\newblock \showarticletitle{Why is my classifier discriminatory?}. In \bibinfo{booktitle}{\emph{NIPS '18}}. \bibinfo{publisher}{Curran Associates Inc.}, \bibinfo{address}{Red Hook, NY, USA}, \bibinfo{pages}{3543–3554}.
\newblock


\bibitem[Davis and Kahan(1970)]%
        {davis1970rotation}
\bibfield{author}{\bibinfo{person}{Chandler Davis} {and} \bibinfo{person}{William~Morton Kahan}.} \bibinfo{year}{1970}\natexlab{}.
\newblock \showarticletitle{The rotation of eigenvectors by a perturbation. III}.
\newblock \bibinfo{journal}{\emph{SIAM J. Numer. Anal.}} \bibinfo{volume}{7}, \bibinfo{number}{1} (\bibinfo{year}{1970}), \bibinfo{pages}{1--46}.
\newblock


\bibitem[Gantner et~al\mbox{.}(2011)]%
        {gantner2011personalized}
\bibfield{author}{\bibinfo{person}{Zeno Gantner}, \bibinfo{person}{Lucas Drumond}, \bibinfo{person}{Christoph Freudenthaler}, {and} \bibinfo{person}{Lars Schmidt-Thieme}.} \bibinfo{year}{2011}\natexlab{}.
\newblock \showarticletitle{Personalized Ranking for non-uniformly sampled items}. In \bibinfo{booktitle}{\emph{KDDCUP '11}}. \bibinfo{publisher}{JMLR.org}, \bibinfo{pages}{231–247}.
\newblock


\bibitem[Goldberg et~al\mbox{.}(2001)]%
        {goldberg2001eigentaste}
\bibfield{author}{\bibinfo{person}{Ken Goldberg}, \bibinfo{person}{Theresa Roeder}, \bibinfo{person}{Dhruv Gupta}, {and} \bibinfo{person}{Chris Perkins}.} \bibinfo{year}{2001}\natexlab{}.
\newblock \showarticletitle{Eigentaste: A constant time collaborative filtering algorithm}.
\newblock \bibinfo{journal}{\emph{information retrieval}}  \bibinfo{volume}{4} (\bibinfo{year}{2001}), \bibinfo{pages}{133--151}.
\newblock


\bibitem[Hardt et~al\mbox{.}(2016)]%
        {hardt2016equality}
\bibfield{author}{\bibinfo{person}{Moritz Hardt}, \bibinfo{person}{Eric Price}, {and} \bibinfo{person}{Nathan Srebro}.} \bibinfo{year}{2016}\natexlab{}.
\newblock \showarticletitle{Equality of Opportunity in Supervised Learning}. In \bibinfo{booktitle}{\emph{NIPS '16}}. \bibinfo{publisher}{Curran Associates Inc.}, \bibinfo{address}{Red Hook, NY, USA}, \bibinfo{pages}{3323–3331}.
\newblock


\bibitem[Harper and Konstan(2015)]%
        {movielens}
\bibfield{author}{\bibinfo{person}{F.~Maxwell Harper} {and} \bibinfo{person}{Joseph~A. Konstan}.} \bibinfo{year}{2015}\natexlab{}.
\newblock \showarticletitle{The MovieLens Datasets: History and Context}.
\newblock \bibinfo{journal}{\emph{ACM Trans. Interact. Intell. Syst.}} \bibinfo{volume}{5}, \bibinfo{number}{4} (\bibinfo{year}{2015}), \bibinfo{numpages}{19}~pages.
\newblock
\showISSN{2160-6455}
\href{https://doi.org/10.1145/2827872}{doi:\nolinkurl{10.1145/2827872}}


\bibitem[He et~al\mbox{.}(2020)]%
        {he2020lightgcn}
\bibfield{author}{\bibinfo{person}{Xiangnan He}, \bibinfo{person}{Kuan Deng}, \bibinfo{person}{Xiang Wang}, \bibinfo{person}{Yan Li}, \bibinfo{person}{YongDong Zhang}, {and} \bibinfo{person}{Meng Wang}.} \bibinfo{year}{2020}\natexlab{}.
\newblock \showarticletitle{LightGCN: Simplifying and Powering Graph Convolution Network for Recommendation}. In \bibinfo{booktitle}{\emph{SIGIR '20}}. \bibinfo{publisher}{ACM}, \bibinfo{address}{New York, NY, USA}, \bibinfo{pages}{639–648}.
\newblock
\showISBNx{9781450380164}
\href{https://doi.org/10.1145/3397271.3401063}{doi:\nolinkurl{10.1145/3397271.3401063}}


\bibitem[Hotelling(1933)]%
        {hotelling1933analysis}
\bibfield{author}{\bibinfo{person}{Harold Hotelling}.} \bibinfo{year}{1933}\natexlab{}.
\newblock \showarticletitle{Analysis of a complex of statistical variables into principal components.}
\newblock \bibinfo{journal}{\emph{Journal of educational psychology}} \bibinfo{volume}{24}, \bibinfo{number}{6} (\bibinfo{year}{1933}), \bibinfo{pages}{417}.
\newblock


\bibitem[Hu et~al\mbox{.}(2008)]%
        {hu2008collaborative}
\bibfield{author}{\bibinfo{person}{Yifan Hu}, \bibinfo{person}{Yehuda Koren}, {and} \bibinfo{person}{Chris Volinsky}.} \bibinfo{year}{2008}\natexlab{}.
\newblock \showarticletitle{Collaborative Filtering for Implicit Feedback Datasets}. In \bibinfo{booktitle}{\emph{ICDM '08}}. \bibinfo{publisher}{IEEE}, \bibinfo{pages}{263--272}.
\newblock


\bibitem[Jiang et~al\mbox{.}(2020)]%
        {jiang2020faster}
\bibfield{author}{\bibinfo{person}{Haotian Jiang}, \bibinfo{person}{Tarun Kathuria}, \bibinfo{person}{Yin~Tat Lee}, \bibinfo{person}{Swati Padmanabhan}, {and} \bibinfo{person}{Zhao Song}.} \bibinfo{year}{2020}\natexlab{}.
\newblock \showarticletitle{{ A Faster Interior Point Method for Semidefinite Programming }}. In \bibinfo{booktitle}{\emph{FOCS '20}}. \bibinfo{publisher}{IEEE Computer Society}, \bibinfo{address}{Los Alamitos, CA, USA}, \bibinfo{pages}{910--918}.
\newblock


\bibitem[Kamani et~al\mbox{.}(2022)]%
        {efficient-fair-pca}
\bibfield{author}{\bibinfo{person}{Mohammad~Mahdi Kamani}, \bibinfo{person}{Farzin Haddadpour}, \bibinfo{person}{Rana Forsati}, {and} \bibinfo{person}{Mehrdad Mahdavi}.} \bibinfo{year}{2022}\natexlab{}.
\newblock \showarticletitle{Efficient Fair Principal Component Analysis}.
\newblock \bibinfo{journal}{\emph{Mach. Learn.}} \bibinfo{volume}{111}, \bibinfo{number}{10} (\bibinfo{date}{oct} \bibinfo{year}{2022}), \bibinfo{pages}{3671–3702}.
\newblock


\bibitem[Khani and Liang(2020)]%
        {khani2020feature}
\bibfield{author}{\bibinfo{person}{Fereshte Khani} {and} \bibinfo{person}{Percy Liang}.} \bibinfo{year}{2020}\natexlab{}.
\newblock \showarticletitle{Feature noise induces loss discrepancy across groups}. In \bibinfo{booktitle}{\emph{ICML '20}}. \bibinfo{publisher}{JMLR.org}, \bibinfo{pages}{5209--5219}.
\newblock


\bibitem[Kim and Yum(2005)]%
        {kim2005collaborative}
\bibfield{author}{\bibinfo{person}{Dohyun Kim} {and} \bibinfo{person}{Bong-Jin Yum}.} \bibinfo{year}{2005}\natexlab{}.
\newblock \showarticletitle{Collaborative filtering based on iterative principal component analysis}.
\newblock \bibinfo{journal}{\emph{Expert Systems with Applications}} \bibinfo{volume}{28}, \bibinfo{number}{4} (\bibinfo{year}{2005}), \bibinfo{pages}{823--830}.
\newblock


\bibitem[Klimashevskaia et~al\mbox{.}(2024)]%
        {klimashevskaia2024survey}
\bibfield{author}{\bibinfo{person}{Anastasiia Klimashevskaia}, \bibinfo{person}{Dietmar Jannach}, \bibinfo{person}{Mehdi Elahi}, {and} \bibinfo{person}{Christoph Trattner}.} \bibinfo{year}{2024}\natexlab{}.
\newblock \showarticletitle{A survey on popularity bias in recommender systems}.
\newblock \bibinfo{journal}{\emph{User Modeling and User-Adapted Interaction}} \bibinfo{volume}{34}, \bibinfo{number}{5} (\bibinfo{date}{Nov.} \bibinfo{year}{2024}), \bibinfo{pages}{1777--1834}.
\newblock
\showISSN{1573-1391}
\href{https://doi.org/10.1007/s11257-024-09406-0}{doi:\nolinkurl{10.1007/s11257-024-09406-0}}


\bibitem[Lee et~al\mbox{.}(2022)]%
        {mmd-fair-pca}
\bibfield{author}{\bibinfo{person}{Junghyun Lee}, \bibinfo{person}{Gwangsu Kim}, \bibinfo{person}{Matt Olfat}, \bibinfo{person}{Mark Hasegawa-Johnson}, {and} \bibinfo{person}{Chang~D. Yoo}.} \bibinfo{year}{2022}\natexlab{}.
\newblock \showarticletitle{Fast and Efficient MMD-Based Fair PCA via Optimization over Stiefel Manifold}. In \bibinfo{booktitle}{\emph{AAAI '22}}. \bibinfo{publisher}{AAAI Press}, \bibinfo{address}{Palo Alto, CA, USA}, \bibinfo{pages}{7363--7371}.
\newblock


\bibitem[Lee et~al\mbox{.}(2024)]%
        {lee2024large}
\bibfield{author}{\bibinfo{person}{Messi~H.J. Lee}, \bibinfo{person}{Jacob~M. Montgomery}, {and} \bibinfo{person}{Calvin~K. Lai}.} \bibinfo{year}{2024}\natexlab{}.
\newblock \showarticletitle{Large Language Models Portray Socially Subordinate Groups as More Homogeneous, Consistent with a Bias Observed in Humans}. In \bibinfo{booktitle}{\emph{FAccT '24}}. \bibinfo{publisher}{ACM}, \bibinfo{address}{New York, NY, USA}, \bibinfo{pages}{1321–1340}.
\newblock


\bibitem[Liang et~al\mbox{.}(2016)]%
        {liang2016causal}
\bibfield{author}{\bibinfo{person}{Dawen Liang}, \bibinfo{person}{Laurent Charlin}, {and} \bibinfo{person}{David Blei}.} \bibinfo{year}{2016}\natexlab{}.
\newblock \showarticletitle{Causal Inference for Recommendation}. In \bibinfo{booktitle}{\emph{2016 Workshop on Causation: Foundation to Application at UAI}}.
\newblock


\bibitem[Nilashi et~al\mbox{.}(2015)]%
        {nilashi2015multi}
\bibfield{author}{\bibinfo{person}{Mehrbakhsh Nilashi}, \bibinfo{person}{Othman bin Ibrahim}, \bibinfo{person}{Norafida Ithnin}, {and} \bibinfo{person}{Nor~Haniza Sarmin}.} \bibinfo{year}{2015}\natexlab{}.
\newblock \showarticletitle{A multi-criteria collaborative filtering recommender system for the tourism domain using Expectation Maximization (EM) and PCA--ANFIS}.
\newblock \bibinfo{journal}{\emph{Electronic Commerce Research and Applications}} \bibinfo{volume}{14}, \bibinfo{number}{6} (\bibinfo{year}{2015}), \bibinfo{pages}{542--562}.
\newblock


\bibitem[Olfat and Aswani(2019)]%
        {olfat2019convex}
\bibfield{author}{\bibinfo{person}{Matt Olfat} {and} \bibinfo{person}{Anil Aswani}.} \bibinfo{year}{2019}\natexlab{}.
\newblock \showarticletitle{Convex Formulations for Fair Principal Component Analysis}. In \bibinfo{booktitle}{\emph{AAAI'19/IAAI'19/EAAI'19}}. \bibinfo{publisher}{AAAI Press}, \bibinfo{address}{Palo Alto, CA, USA}, Article \bibinfo{articleno}{82}, \bibinfo{numpages}{8}~pages.
\newblock


\bibitem[Pearson(1901)]%
        {pearson1901liii}
\bibfield{author}{\bibinfo{person}{Karl Pearson}.} \bibinfo{year}{1901}\natexlab{}.
\newblock \showarticletitle{LIII. On lines and planes of closest fit to systems of points in space}.
\newblock \bibinfo{journal}{\emph{The London, Edinburgh, and Dublin philosophical magazine and journal of science}} \bibinfo{volume}{2}, \bibinfo{number}{11} (\bibinfo{year}{1901}), \bibinfo{pages}{559--572}.
\newblock


\bibitem[Pelegrina and Duarte(2022)]%
        {closed-form-fair-pca}
\bibfield{author}{\bibinfo{person}{Guilherme~Dean Pelegrina} {and} \bibinfo{person}{Leonardo~Tomazeli Duarte}.} \bibinfo{year}{2022}\natexlab{}.
\newblock \bibinfo{title}{A novel approach for Fair Principal Component Analysis based on eigendecomposition}.
\newblock
\href{https://doi.org/10.48550/ARXIV.2208.11362}{doi:\nolinkurl{10.48550/ARXIV.2208.11362}}


\bibitem[Rudelson and Vershynin(2009)]%
        {rudelson2009smallest}
\bibfield{author}{\bibinfo{person}{Mark Rudelson} {and} \bibinfo{person}{Roman Vershynin}.} \bibinfo{year}{2009}\natexlab{}.
\newblock \showarticletitle{Smallest singular value of a random rectangular matrix}.
\newblock \bibinfo{journal}{\emph{Communications on Pure and Applied Mathematics: A Journal Issued by the Courant Institute of Mathematical Sciences}} \bibinfo{volume}{62}, \bibinfo{number}{12} (\bibinfo{year}{2009}), \bibinfo{pages}{1707--1739}.
\newblock


\bibitem[Samadi et~al\mbox{.}(2018)]%
        {fair-pca}
\bibfield{author}{\bibinfo{person}{Samira Samadi}, \bibinfo{person}{Uthaipon Tantipongpipat}, \bibinfo{person}{Jamie Morgenstern}, \bibinfo{person}{Mohit Singh}, {and} \bibinfo{person}{Santosh Vempala}.} \bibinfo{year}{2018}\natexlab{}.
\newblock \showarticletitle{The Price of Fair PCA: One Extra Dimension}. In \bibinfo{booktitle}{\emph{NIPS '18}}. \bibinfo{publisher}{Curran Associates, Inc.}, \bibinfo{address}{Red Hook, NY, USA}, \bibinfo{pages}{10999–11010}.
\newblock


\bibitem[Sarwar et~al\mbox{.}(2001)]%
        {sarwar2001item}
\bibfield{author}{\bibinfo{person}{Badrul Sarwar}, \bibinfo{person}{George Karypis}, \bibinfo{person}{Joseph Konstan}, {and} \bibinfo{person}{John Riedl}.} \bibinfo{year}{2001}\natexlab{}.
\newblock \showarticletitle{Item-Based Collaborative Filtering Recommendation Algorithms}. In \bibinfo{booktitle}{\emph{WWW '01}}. \bibinfo{publisher}{ACM}, \bibinfo{address}{New York, NY, USA}, \bibinfo{pages}{285–295}.
\newblock


\bibitem[Schnabel et~al\mbox{.}(2016)]%
        {schnabel2016recommendations}
\bibfield{author}{\bibinfo{person}{Tobias Schnabel}, \bibinfo{person}{Adith Swaminathan}, \bibinfo{person}{Ashudeep Singh}, \bibinfo{person}{Navin Chandak}, {and} \bibinfo{person}{Thorsten Joachims}.} \bibinfo{year}{2016}\natexlab{}.
\newblock \showarticletitle{Recommendations as Treatments: Debiasing Learning and Evaluation}. In \bibinfo{booktitle}{\emph{ICML '16}}. \bibinfo{publisher}{PMLR}, \bibinfo{address}{New York, New York, USA}, \bibinfo{pages}{1670--1679}.
\newblock


\bibitem[Srebro et~al\mbox{.}(2004)]%
        {srebro2004maximum}
\bibfield{author}{\bibinfo{person}{Nathan Srebro}, \bibinfo{person}{Jason Rennie}, {and} \bibinfo{person}{Tommi Jaakkola}.} \bibinfo{year}{2004}\natexlab{}.
\newblock \showarticletitle{Maximum-Margin Matrix Factorization}. In \bibinfo{booktitle}{\emph{NIPS '04}}. \bibinfo{publisher}{MIT Press}.
\newblock


\bibitem[Steck(2011)]%
        {steck2011item}
\bibfield{author}{\bibinfo{person}{Harald Steck}.} \bibinfo{year}{2011}\natexlab{}.
\newblock \showarticletitle{Item popularity and recommendation accuracy}. In \bibinfo{booktitle}{\emph{RecSys '11}}. \bibinfo{publisher}{ACM}, \bibinfo{address}{New York, NY, USA}, \bibinfo{pages}{125–132}.
\newblock


\bibitem[Tantipongpipat et~al\mbox{.}(2019)]%
        {fair-pca-multi-criteria}
\bibfield{author}{\bibinfo{person}{Uthaipon~(Tao) Tantipongpipat}, \bibinfo{person}{Samira Samadi}, \bibinfo{person}{Mohit Singh}, \bibinfo{person}{Jamie Morgenstern}, {and} \bibinfo{person}{Santosh Vempala}.} \bibinfo{year}{2019}\natexlab{}.
\newblock \showarticletitle{Multi-Criteria Dimensionality Reduction with Applications to Fairness}. In \bibinfo{booktitle}{\emph{NIPS '19}}. \bibinfo{publisher}{Curran Associates Inc.}, \bibinfo{address}{Red Hook, NY, USA}, \bibinfo{numpages}{11}~pages.
\newblock


\bibitem[Wang et~al\mbox{.}(2025)]%
        {wang2025large}
\bibfield{author}{\bibinfo{person}{Angelina Wang}, \bibinfo{person}{Jamie Morgenstern}, {and} \bibinfo{person}{John~P. Dickerson}.} \bibinfo{year}{2025}\natexlab{}.
\newblock \showarticletitle{Large language models that replace human participants can harmfully misportray and flatten identity groups}.
\newblock \bibinfo{journal}{\emph{Nature Machine Intelligence}} \bibinfo{volume}{7}, \bibinfo{number}{3} (\bibinfo{date}{Feb.} \bibinfo{year}{2025}), \bibinfo{pages}{400–411}.
\newblock
\showISSN{2522-5839}
\href{https://doi.org/10.1038/s42256-025-00986-z}{doi:\nolinkurl{10.1038/s42256-025-00986-z}}


\bibitem[Williams(2012)]%
        {williams2012multiplying}
\bibfield{author}{\bibinfo{person}{Virginia~Vassilevska Williams}.} \bibinfo{year}{2012}\natexlab{}.
\newblock \showarticletitle{Multiplying matrices faster than coppersmith-winograd}. In \bibinfo{booktitle}{\emph{STOC '12}}. \bibinfo{publisher}{ACM}, \bibinfo{address}{New York, NY, USA}, \bibinfo{pages}{887–898}.
\newblock


\bibitem[Yu et~al\mbox{.}(2015)]%
        {yu2015useful}
\bibfield{author}{\bibinfo{person}{Yi Yu}, \bibinfo{person}{Tengyao Wang}, {and} \bibinfo{person}{Richard~J Samworth}.} \bibinfo{year}{2015}\natexlab{}.
\newblock \showarticletitle{A useful variant of the Davis--Kahan theorem for statisticians}.
\newblock \bibinfo{journal}{\emph{Biometrika}} \bibinfo{volume}{102}, \bibinfo{number}{2} (\bibinfo{year}{2015}), \bibinfo{pages}{315--323}.
\newblock


\bibitem[Zhu et~al\mbox{.}(2021a)]%
        {zhu2021popularitybias}
\bibfield{author}{\bibinfo{person}{Ziwei Zhu}, \bibinfo{person}{Yun He}, \bibinfo{person}{Xing Zhao}, {and} \bibinfo{person}{James Caverlee}.} \bibinfo{year}{2021}\natexlab{a}.
\newblock \showarticletitle{Popularity Bias in Dynamic Recommendation}. In \bibinfo{booktitle}{\emph{KDD '21}}. \bibinfo{publisher}{ACM}, \bibinfo{address}{New York, NY, USA}, \bibinfo{pages}{2439–2449}.
\newblock
\showISBNx{9781450383325}
\href{https://doi.org/10.1145/3447548.3467376}{doi:\nolinkurl{10.1145/3447548.3467376}}


\bibitem[Zhu et~al\mbox{.}(2021b)]%
        {zhu2021popularityopportunity}
\bibfield{author}{\bibinfo{person}{Ziwei Zhu}, \bibinfo{person}{Yun He}, \bibinfo{person}{Xing Zhao}, \bibinfo{person}{Yin Zhang}, \bibinfo{person}{Jianling Wang}, {and} \bibinfo{person}{James Caverlee}.} \bibinfo{year}{2021}\natexlab{b}.
\newblock \showarticletitle{Popularity-Opportunity Bias in Collaborative Filtering}. In \bibinfo{booktitle}{\emph{WSDM '21}}. \bibinfo{publisher}{ACM}, \bibinfo{address}{New York, NY, USA}, \bibinfo{pages}{85–93}.
\newblock
\showISBNx{9781450382977}
\href{https://doi.org/10.1145/3437963.3441820}{doi:\nolinkurl{10.1145/3437963.3441820}}


\end{thebibliography}

\clearpage
\appendix
\section{Proofs} 

\subsection{Proof of Theorem \ref{thm:unfair}} \label{sec:existence-thm-proof}
Let $X'_n \in \{0, 1\}^{n \times m_n}$ be a copy of $X_n$ where all entries in columns $j > M_n$ are set to zero.
We will show that the projection matrix corresponding to performing PCA on $X_n$ is close to the projection matrix of PCA on $X'_n$.

Let $C_n = X_n^{\top} X_n$ and $C_n'= X_n'^{\top} X_n'$.
Let $U_n, U_n' \in \bR^{m_n \times M_n}$ be the matrix whose columns correspond to the $M_n$ normalized eigenvectors corresponding to the $M_n$ largest eigenvalues of $C_n$ and $C_n'$ respectively.

\begin{claim} \label{claim:identity}
$U_n' U_n'^{\top} = I_{n, M_n}$
\end{claim}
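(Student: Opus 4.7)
The plan is to exploit the sparsity structure of $X'_n$: since its last $d_n - M_n$ columns are zero, the matrix $C'_n = X_n'^{\top} X_n'$ has all entries outside the top-left $M_n \times M_n$ block equal to zero. Concretely, the $(j,k)$ entry of $C'_n$ is $\langle X'_{n,.j}, X'_{n,.k}\rangle$, which vanishes whenever $j > M_n$ or $k > M_n$. Writing $\tilde{X}_n \in \{0,1\}^{n \times M_n}$ for the submatrix consisting of the first $M_n$ columns of $X_n$, we therefore have the block decomposition with $\tilde{X}_n^{\top}\tilde{X}_n$ in the top-left and zeros elsewhere.

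Next I would use \cref{assump:popular} to argue that $\tilde{X}_n^{\top}\tilde{X}_n$ is positive definite: the $M_n$ nonzero singular values of $X'_n$ coincide with the singular values of $\tilde{X}_n$, and by assumption $s_{M_n}(X'_n) = \Omega(\sqrt{n}) > 0$, so $\tilde{X}_n$ has full column rank $M_n$. Consequently the $M_n$ largest eigenvalues of $C'_n$ are strictly positive and agree with the eigenvalues of $\tilde{X}_n^{\top}\tilde{X}_n$, while the remaining $d_n - M_n$ eigenvalues are zero with eigenvectors supported on coordinates $\{M_n+1, \ldots, d_n\}$.

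From here the claim is essentially a tautology: because the top $M_n$ eigenvectors of $C'_n$ must be orthogonal to the kernel, which contains $\{e_{M_n+1}, \ldots, e_{d_n}\}$, each column of $U'_n$ is supported on the first $M_n$ coordinates. Since $U'_n$ has $M_n$ orthonormal columns all living in the $M_n$-dimensional subspace $\operatorname{span}\{e_1, \ldots, e_{M_n}\}$, the columns form an orthonormal basis of that subspace, and hence $U'_n U_n'^{\top}$ is the orthogonal projector onto $\operatorname{span}\{e_1, \ldots, e_{M_n}\}$, which by definition is $I_{n, M_n}$.

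The only mild subtlety, and the one I would flag in the writeup, is ensuring $U'_n$ is well-defined as the matrix of top-$M_n$ eigenvectors when some of the top eigenvalues could coincide; but even with multiplicities, any orthonormal choice of eigenvectors for the positive eigenspace must span the coordinate subspace $\operatorname{span}\{e_1,\ldots,e_{M_n}\}$, so $U'_n U_n'^{\top}$ is uniquely determined and equals $I_{n,M_n}$. No serious obstacle arises; the proof is a short structural computation enabled by \cref{assump:popular}.
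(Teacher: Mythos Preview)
Your proposal is correct and follows essentially the same approach as the paper: both exploit the block-diagonal structure of $C'_n$ to see that the top-$M_n$ eigenvectors are padded eigenvectors of the $M_n\times M_n$ block, hence $U'_n U_n'^{\top}$ projects onto $\operatorname{span}\{e_1,\ldots,e_{M_n}\}$. If anything, your version is slightly more careful in explicitly invoking \cref{assump:popular} to guarantee that the top $M_n$ eigenvalues are strictly positive (separating them from the zero eigenspace) and in noting that the projector is well-defined despite possible eigenvalue multiplicities.
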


\noindent \textit{Proof of \cref{claim:identity}.}
Let $Y_n \in \bR^{M_n \times M_n}$ correspond to the top-left block of $C'_n$.
Let $V_n  \in \bR^{M_n \times M_n}$ have columns that are the eigenvectors of $Y_n$, where $V_n$ is orthonormal (since $Y_n$ is symmetric).
Therefore, $V_n V_n^{\top} = I_{M_n}$ is the identify matrix.
Now, $C'_n$ is simply $Y_n$ in its top left block, and all other entries are 0.
Therefore, if $v \in \bR^{M_n}$ is an eigenvector of $Y_n$, then the vector $v$ padded with zeros, $(v, 0, \dots, 0) \in \bR^{m}$ is an eigenvector of $C'_n$.
Therefore, each column of $U_n'$ is simply an eigenvector $v$ of $Y_n$, padded with 0's to make it a length $m$ vector.
The other eigenvectors of $C_n'$ that do not have this form are the ones whose corresponding eigenvalue is 0, since 0 is an eigenvalue of $C_n'$ with multiplicity $m_n - M_n$.
$\blacksquare$

Now, we use a variant of the Davis-Kahan theorem \citep{davis1970rotation} from \citet{yu2015useful}.
Using the notation in Theorem 2 in \citet{yu2015useful}, we let $r = 1$ and $s = M_n$. Then, using the fact that $||\sin \Theta(U, U')||_F = \frac{1}{\sqrt{2}} ||UU^{\top} - U' U'^{\top}||_F$,
\begin{align}
   || UU^{\top} - I_{n, M_n}||_F \leq \frac{2\sqrt{2}\; ||C - C'||_F} {\lambda_{M_n}(C')},
\end{align}
where $\lambda_{M_n}(C')$ is the $M_n$'th largest eigenvalue of $C'$.
Since all of the less popular items satisfy \cref{assump:unpopular}, every entry in $C - C'$ is upper bounded by $K$.
Therefore, $||C - C'||_F \leq m_n K$.
Next, since the eigenvalues of $C'$ correspond to the square of the singular values of $X'_n$, and since $\sigma_{M_n}(X'_n) = \Omega(\sqrt{n})$, we have that $\lambda_{M_n}(C') = \Omega(n)$.
Therefore, $||P_n - I_{n, M_n}||_F = O(m_n K / n)$, which approaches 0 as $n \to \infty$ since $m = o(n)$ and $K$ is a constant. \qed

\subsection{Proof of Theorem \ref{thm:comp_efficient}} \label{sec:efficiency-thm-proof}
To prove the theorem, we first show that extreme-point optimal solutions to the convex relaxation \algname{} are optimal solutions for the problem statement in Equations \eqref{eqn:fair-pca-obj}-\eqref{eqn:fair-pca-constraints} (the ``original problem''). 

The relaxation in \algname{} is over the feasible set. Instead of optimizing over rank $r$ projection matrices $P = UU^T$, \algname{} optimizes over PSD matrices with bounded eigenvalues and trace. Observe that any optimal solution to the problem posed in the original problem is a feasible solution for \algname{}.
\begin{claim}
    Any optimal solution $P^*$ to the original problem is a feasible solution for \algname{}.
\end{claim}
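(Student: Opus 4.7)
The plan is to verify directly that each defining condition of the feasible set of \algname{}---positive semidefiniteness, the trace bound, and the eigenvalue bound---holds for any $P^* = UU^\top$ with $U \in \mathbb{R}^{d \times r}$ and $U^\top U = I_r$. Since the claim is purely set-theoretic (``rank-$r$ projectors sit inside the convex spectrahedron''), no optimality information about $P^*$ is used beyond the fact that it satisfies the original feasibility constraints.

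First I would observe that $P^* = UU^\top$ is symmetric by construction, and for any $x \in \mathbb{R}^d$, $x^\top P^* x = \|U^\top x\|_2^2 \geq 0$, so $P^* \succeq 0$. Next I would compute the trace using the cyclic property: $\mathrm{tr}(UU^\top) = \mathrm{tr}(U^\top U) = \mathrm{tr}(I_r) = r$, which meets the constraint $\mathrm{tr}(P) \leq r$ with equality. Finally, to verify the eigenvalue bound $P^* \preceq I$, I would argue that $P^*$ is an orthogonal projector: the condition $U^\top U = I_r$ forces the columns of $U$ to form an orthonormal basis of an $r$-dimensional subspace, so $(P^*)^2 = UU^\top UU^\top = UI_rU^\top = P^*$, meaning every eigenvalue of $P^*$ lies in $\{0,1\} \subseteq [0,1]$.

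None of these three steps is genuinely an obstacle; the claim is essentially a relaxation sanity-check, and the only step with any content is the idempotence computation that pins the eigenvalues into $[0,1]$. The only thing worth being careful about is interpreting $0 \preceq P \preceq 1$ in the SDP---I would read the ``$\preceq 1$'' as $I - P \succeq 0$ (so all eigenvalues of $P$ are at most $1$), which is exactly what the idempotence argument delivers. Combining the three verifications shows that $P^*$ satisfies every constraint in \eqref{eqn:item-preference-constraints}, completing the claim and setting up the subsequent argument that the SDP's extreme-point optimum is itself a rank-$r$ projector.
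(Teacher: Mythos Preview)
Your proof is correct and follows essentially the same approach as the paper: both verify the trace and eigenvalue constraints directly from the factorization $P^* = UU^\top$ with $U^\top U = I_r$. The paper phrases it via the eigendecomposition (eigenvalues are $1$ with multiplicity $r$ and $0$ otherwise, hence trace $r$ and $0 \preceq P^* \preceq I$), while you reach the same conclusions through the cyclic trace identity and idempotence; these are cosmetic differences only.
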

\begin{proof}
    The optimal solution $P^*$ is a projection matrix that can be factorized as $P^* = U^*U^{*T}$. This factorization is also the eigendecomposition of the matrix where the eigenvalues are $1$ with multiplicity $r$ and $0$ with multiplicity $m-r$. Since the trace of a matrix is the sum of its eigenvalues, the trace constraint $\text{tr}(P^*) = r$ in \algname{} is satisfied. Further since all eigenvalues are $\in [0, 1]$ the eigenvalue constraints $0 \preceq P^* \preceq I_m$ are also satisfied.
\end{proof}
Now, we can prove the theorem if we can show that an extreme-point optimal solution to \algname{} satisfies two properties (i) can be expressed as $UU^T$ where $U^TU = I_r$ and (ii) can be found in polynomial time.

To show (i) we utilize the definition of an extreme point. An extreme point of a convex set is a point that is not a linear combination of two other points in the convex set. For the convex set defined in the constraints of \algname{}, an extreme point must have eigenvalues of $0$ and $1$. 

Suppose there is an extreme point $P' = \sum_{i=1}^m \lambda_i u_iu_i^T$ where there exists a single fractional $\lambda_{i'} \in (0, 1)$. Then it is possible to define $P'$ as the linear combination (average) of the matrices $P' + \epsilon u_{i'}u_{i'}^T$ and $P' -  \epsilon u_{i'}u_{i'}^T$ where $\epsilon \leq \lambda_{i'} \leq 1 - \epsilon$. Note that when there is one fractional eigenvalue, the trace constraint is not tight since $r$ is an integer, thus $P' + \epsilon u_{i'}u_{i'}^T$ is a feasible matrix.

If there are two or more fractional eigenvalues the matrix also cannot be an extreme point. Let $\lambda_1, \lambda_2 \in (0, 1)$. Then we define $P'$ as the average of two matrices: $P' + \epsilon_1\lambda_1u_1u_1^T - \epsilon_2\lambda_2u_2u_2^T$ and $P' - \epsilon_1\lambda_1u_1u_1^T + \epsilon_2\lambda_2u_2u_2^T$ where $\epsilon_1\lambda_1 = \epsilon_2\lambda_2$. Note that the perturbations does not affect the trace of the matrix so the perturbed matrices are feasible even if the trace constraint is tight for $P'$.

Now, since all eigenvalues of extreme points for \algname{} are $0$ or $1$ and the trace is the sum of eigenvalues, the rank of an extreme point is at most $r$ to satisfy the trace constraint. Thus, an extreme point of \algname{} can be eigendecomposed as $UU^T$ where $U^TU = I_r$.

To show (ii) we utilize Theorem 1.8 from \citet{fair-pca-multi-criteria} which states that for SDPs with a linear objective function, $m$ linear (in)equality constraints, and eigenvalue constraints equivalent to those in \algname{}, an extreme-point optimal solution can be found in polynomial time. We elaborate on the runtime complexity of \algname{} in \cref{sec:runtime-complexity}.

Last we discuss the addition of an optional linear reconstruction error constraint and show that \algname{} yields a projection matrix of rank at most $r$. From Theorem 1.8 in \citet{fair-pca-multi-criteria}, we have that all extreme point optimal solutions have rank at most $r$ and can be found in polynomial time. 

To show that an extreme point optimal solution is a projection matrix we must again show that the eigenvalues are integer. We prove by contradiction: consider an extreme point optimal solution $P' = \sum_{i=1}^m \lambda_iu_iu_i^T$ where there exists a single fractional eigenvalue $\lambda_{i'}$. We can show that such a point cannot be optimal because setting $\lambda_{i'} = 1$ would be feasible and improve the objective. The objective function can be written as a linear combination of the eigenvalues of $P'$: $\sum_{i=1}^m c_i\lambda_i$. $c_{i'}$ must be positive, otherwise $P' - \lambda_{i'}u_{i'}u_{i'}^T$ would have a higher objective value. Thus increasing $\lambda_{i'}$ increases the objective. Setting to $1$ also decreases the reconstruction error given that the increase of any eigenvalue decreases reconstruction error. And the perturbed matrix $P' + \left(1-\lambda_{i'}\right)u_{i'}u_{i'}^T$ is feasible for the integer trace constraint given that the trace of $P'$ is at most $r + \lambda_{i'} - 1$.

If there are two fractional eigenvalues, the argument with $\epsilon_1$ and $\epsilon_2$ can again be used to show $P'$ is not an extreme point, where $\epsilon_1$ and $\epsilon_2$ are defined to preserve reconstruction error.
\qed

\subsection{Proofs for \algname{} Analysis}\label{sec:optimality-proofs}

\subsubsection{Theorem \ref{thm:instantiations}}\label{sec:instantiations-thm-proof}
\begin{proof}

From both \citet{olfat2019convex} and \citet{arora2013stochastic}, we can express Vanilla PCA via a tight convex relaxation: 
\begin{align} 
    \begin{split}
        \argmax_{P\in\bR^{m \times m}}  \quad&\sum_{j=1}^m \langle  X_{.j}, \hX_{.j} \rangle\\
        \text{s.t.} \quad  
        & \text{tr}\left(P\right) \leq r, 0 \preceq P \preceq 1 
    \end{split}\tag{Convex Relaxation of PCA}
\end{align}

Note that the above SDP is exactly equal to \algname{} instantiated with $w_j = 1 \forall j \in [m]$ i.e. $\gamma = 0$, proving that Vanilla PCA is equivalent to \algname{} with $\gamma = 0$.

For column normalization, we will show that instantiating \algname{} with $\gamma=-2$ is equivalent to Normalized PCA for item-regular block-diagonal preference matrices. Recall that Normalized PCA solves the following optimization problem to recover a projection matrix:
\begin{align} 
    \begin{split}
        \argmax_{P\in\bR^{m \times m}}  \quad& \|XD - XDP\|^2\\
        \text{s.t.} \quad  
        & \text{tr}\left(P\right) \leq r, 0 \preceq P \preceq 1\\
        & D_{jj} = p_j^{-1}
    \end{split}\tag{Normalized PCA}
\end{align}

Similar to the convex relaxation of PCA, the objective function can be expanded as a trace:
\begin{align*}
    &\|X - XP\|_F^2  = \text{tr}\left(\left(XD  - XDP\right)^T\left(XD - XDP\right)\right)\\
    &= \text{tr}\left(DX^TXD - DX^TXDP - PDX^TXD + PDX^TXDP\right)\\
    &= \text{tr}\left(DX^TXD\right) - \text{tr}\left(DX^TXDP\right) - \text{tr}\left(PDX^TXD\right) + \text{tr}\left(PDX^TDXP\right)\\
    &= \text{tr}\left(DX^TXD\right) - \text{tr}\left(DX^TXDP\right) - \text{tr}\left(DX^TXDP\right) + \text{tr}\left(DX^TXDPP\right)\\
    &= \text{tr}\left(DX^TXD\right) - \text{tr}\left(DX^TXDP\right) - \text{tr}\left(DX^TXDP\right) + \text{tr}\left(DX^TXDP\right)\\ 
    &= \text{tr}\left(DX^TXD\right) - \text{tr}\left(DX^TXDP\right)
\end{align*}
With the last line, the first term is constant, so Normalized PCA minimizes $- \text{tr}\left(DX^TXDP\right)$. In the special case where $X$ is item-regular and block diagonal, the matrix $DX^TXD = D^2X^TX$ given that the covariance matrix $X^TX$ is itself block diagonal and the entries of $D$ are constant within blocks.

Since the diagonal entries of $D$ are $p_j^{-1}$ we have shown that the coefficient matrix $D^2$ in Normalized PCA is equivalent to $\gamma = -2$.
\end{proof}

\subsubsection{Optimality over baselines} \label{sec:optimality-over-baselines}
We provide a concrete instance in which \algname{} balances popular and unpopular items while the PCA baselines yield extreme, undesirable outcomes. Let $X$ be a user preference matrix satisfying the following three assumptions:

\begin{assumption}[Exclusivity] \label{assumption:exclusivity}
    The set of items can be partitioned into $m_p$ popular items and $m_u$ unpopular items. Each user likes either only popular items or only unpopular items.
\end{assumption}

\begin{assumption}[Constant Popularity]\label{assumption:constant}
For all popular items, there are $n_p$ users that like the item, and for all unpopular items, there are $n_u$ users that like the item, where $n_u < n_p$.
\end{assumption}

\begin{assumption}[Exponential Decay] \label{assumption:decay}
    $X_p^TX_p$ and $X_u^TX_u$ are both of rank $r$ and their respective eigenvalues decay exponentially such that for each matrix, the $i^{th}$ largest eigenvalue $\lambda_i = \beta^{-(i - 1)}\lambda_1$, where $\beta > 1$ and $i \leq r$. 
\end{assumption}

Theorem \ref{thm:baseline} states that when the popularity gap is large enough and there are sufficiently many unpopular items, for a rank $r$ projection, Vanilla PCA only reconstructs popular items whereas Normalized PCA only reconstructs unpopular items. \algname{} with $\gamma = -1$, on the other hand, reconstructs both popular and unpopular items in parallel. In the theorem, let $V_p$ be the principal components characterizing the popular items and $V_u$ be the components for the unpopular items.

\begin{theorem}\label{thm:baseline}
    For any binary preference matrix $X$ satisfying Assumptions \ref{assumption:exclusivity}-\ref{assumption:decay}, if $\frac{n_u}{n_p} < \beta^{-2(r - 1)}$ and $m_u = \sqrt{\frac{n_p}{n_u}}m_p$, then the leading $r$ Vanilla PCA components are $V_p$; the leading $r$ column-normalized PCA components are $V_u$. For \algname{} with $\gamma = -1$, half of the leading components are in $V_p$ and the other half is in $V_u$.
\end{theorem}

\begin{proof}
    Observe that the objective for \algname{} can be re-written as:
    \begin{align}
        \sum_{j=1}^m w_j \left<X_{.j}, \widehat{X}_{.j}\right> &= \left<XD, XP\right>\\
        &= \text{Tr}\left(DX^TXP\right) \label{eqn:item-preference-trace}
    \end{align}
    Where $D$ is a diagonal matrix and entry $D_{jj} = w_j$. Thus, the two baselines and \algname{} can be written in terms of Equation \ref{eqn:item-preference-trace} with varying definitions of $D$.

    Observe that the only difference between Equation \ref{eqn:item-preference-trace} and the standard PCA objective is the addition of the weight matrix $D$. Now, we leverage Assumptions \ref{assumption:exclusivity} and \ref{assumption:constant} to show that the weight matrix $D$ does not change the principal components but only their order. To see this, let $V$ be the eigenvectors of $X^TX$. We can write $DV = VD$ because for all entries $i,j$ such that $V_{ij} > 0$, $D_{ii} = D_{jj}$. Thus, the objective for \algname{} becomes:
    \begin{align}
        \text{Tr}\left(DX^TXP\right) &= \text{Tr}\left(D\left(V\Sigma V^T\right)P\right)\\
        &= \text{Tr}\left(\left(V\left(D\Sigma\right)V^T\right)P\right)\\
    \end{align}
    $\Sigma$ is the diagonal matrix of eigenvalues. We can now see that the eigenvectors are still $V$ but the eigenvalues are now scaled to $D\Sigma$. Furthermore, the eigenvectors of $X^TX$ are $\{V_p, V_u\}$ given that $X^TX$ is block diagonal. In the below, let $\lambda_i^u$ be the $i^{th}$ largest eigenvector of $X_u^TX_u$ and $\lambda_i^p$ be the same for $X_p^TX_p$.

    We can bound the sum of eigenvalues of $X_p^TX_p$ as follows:
    \begin{align}
        \sum_{i=1}^r \lambda_i^p &= \text{Tr}\left(X_p^TX_p\right)\\
        &= \|X_p\|_F^2\\
        &= n_pm_p
    \end{align}
    Analogous steps show that the sum of eigenvalues of $X_u^TX_u$ equals $n_um_u$.
    Now, we use Assumption \ref{assumption:decay} to establish the ratio between the leading eigenvalues of the group covariance matrices:
    \begin{align}
        \frac{\lambda_1^p \left(\sum_{i=1}^r \beta^{-i + 1}\right)}{\lambda_1^u \left(\sum_{i=1}^r \beta^{-i + 1}\right)} &= \frac{n_pm_p}{n_um_u}\\
        \frac{\lambda_1^p}{\lambda_1^u} &= \frac{n_p}{n_u} \frac{m_p}{m_u}\\
        \frac{\lambda_1^p}{\lambda_1^u} &= \sqrt{\frac{n_p}{n_u}}\\
    \end{align}
    Thus, all eigenvalues of $X_p^TX_p$ are $\sqrt{\frac{n_p}{n_u}}$ times larger than the corresponding eigenvalue for $X_u^TX_u$.
    
    In the case of Vanilla PCA, $D = I$. We can show that the largest eigenvalue of $X_u^TX_u$ is still smaller than the smallest non-zero eigenvalue of $X_p^TX_p$:
    \begin{align}
        \lambda_r^p &= \lambda_1^p \beta^{1 - r}\\
        &= \lambda_1^u \left(\sqrt{\frac{n_p}{n_u}}\right) \beta^{1 - r}\\
        &\geq \lambda_1^u \beta^{r - 1} \beta^{1 - r}\\
        &\geq \lambda_1^u
    \end{align}
    Thus the $r$ largest eigenvalues correspond with $V_p$. 

    On the other hand, we can show that when $D_{ii} = n_p^{-1} ~\forall i \in I_p$ and $D_{ii} = n_u^{-1} ~\forall i \in I_u$, as in the case of Normalized PCA, the smallest re-scaled eigenvalue for $X_u^TX_u$ will be larger than the largest re-scaled eigenvalue of $X_p^TX_p$
    \begin{align}
        \lambda_r^u & n_u^{-1} = \lambda_1^u \beta^{r - 1} n_u^{-1}\\
        &= \lambda_1^p \sqrt{\frac{n_u}{n_p}}\beta^{1 - r} n_u^{-1}\\
        &= \frac{\lambda_1^p}{n_p} \sqrt{\frac{n_p}{n_u}} \beta^{1 - r}\\
        &\geq \frac{\lambda_1^p}{n_p} \beta^{r - 1} \beta^{1 - r}\\
        &\geq \frac{\lambda_1^p}{n_p}
    \end{align}
    Thus, after re-scaling with column-normalized PCA all of the top $r$ eigenvectors will correspond with $V_u$.

    In the case of \algname{} with $\gamma=-1$, the rescaled $i^{th}$ eigenvalue for $X_p^TX_p$ will be:
    \begin{align}
        \frac{1}{\sqrt{n_p}} \lambda_i^p &= \frac{1}{\sqrt{n_p}} \frac{\sqrt{n_p}}{\sqrt{n_u}} \lambda_i^u\\
        &= \frac{1}{\sqrt{n_u}} \lambda_i^u
    \end{align}
    Thus, the re-scaled eigenvalues for $X_p^TX_p$ exactly equal the rescaled eigenvalues for $X_u^TX_u$. In taking the top $r$ eigenvectors, the final set will contain one half from $V_p$ and another half from $V_u$.
\end{proof}

\subsection{Additional Proofs} \label{sec:additional-proofs}
\subsubsection{Diminishing returns}
\begin{theorem} \label{thm:diminishing-returns}
    Let $X \in \mathbb{R}^{n \times d}$, then the $i^{th}$ principal component reduces the reconstruction error by:
    \begin{equation*}
        \|X - XU_{i-1}U_{i-1}^T\|_F^2  - \|X - XU_{i}U_{i}^T\|_F^2 = \sigma_i^2
    \end{equation*}
    Where the columns of $U_i$ are the leading $i$ principal components and $\sigma_i$ is the $i^{th}$ largest singular value of $X$ by magnitude.
\end{theorem}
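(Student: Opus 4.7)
The plan is to prove this as a direct consequence of the singular value decomposition (SVD) of $X$, invoking the fact that the top-$k$ right singular vectors are exactly the leading $k$ principal components of $X$. Write $X = \sum_{j=1}^{\min(n,d)} \sigma_j u_j v_j^T$, where $\sigma_1 \geq \sigma_2 \geq \cdots \geq 0$ are the singular values and $\{u_j\}, \{v_j\}$ are the left and right singular vectors, respectively, each forming an orthonormal set. Since the principal components of $X$ (i.e.\ the eigenvectors of $X^T X$) coincide with the right singular vectors $v_j$, we have $U_i = [v_1, \ldots, v_i]$.

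Next, I would compute $X U_i U_i^T$ explicitly. Using the orthonormality $v_j^T v_k = \mathbf{1}[j=k]$, the projection of any term $\sigma_j u_j v_j^T$ onto $U_i U_i^T$ yields $\sigma_j u_j v_j^T$ when $j \leq i$ and $0$ otherwise. Thus $X U_i U_i^T = \sum_{j=1}^{i} \sigma_j u_j v_j^T$, and the residual is
\begin{equation*}
X - X U_i U_i^T = \sum_{j=i+1}^{\min(n,d)} \sigma_j u_j v_j^T.
\end{equation*}
Because the rank-one matrices $u_j v_j^T$ are mutually orthogonal in the Frobenius inner product (owing to orthonormality of both $\{u_j\}$ and $\{v_j\}$, so that $\langle u_j v_j^T, u_k v_k^T\rangle_F = (u_j^T u_k)(v_j^T v_k) = \mathbf{1}[j=k]$), applying the Pythagorean identity gives
\begin{equation*}
\|X - X U_i U_i^T\|_F^2 = \sum_{j=i+1}^{\min(n,d)} \sigma_j^2.
\end{equation*}

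Finally, I would subtract the analogous identity at rank $i-1$ from the one at rank $i$ to obtain
\begin{equation*}
\|X - X U_{i-1} U_{i-1}^T\|_F^2 - \|X - X U_i U_i^T\|_F^2 = \sigma_i^2,
\end{equation*}
which is the claim. There is no real obstacle here: the entire argument rests on the SVD representation and the orthogonality of singular vectors, both of which are standard. The only care required is in the bookkeeping of indices and in noting that the identification of principal components with right singular vectors is exactly what lets us compute the projection $X U_i U_i^T$ in closed form.
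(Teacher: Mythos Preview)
Your proof is correct. It differs from the paper's in presentation rather than in substance, but the difference is worth noting. The paper first establishes the trace identity $\|X-XP\|_F^2 = \text{tr}(X^TX) - \text{tr}(U^TX^TXU)$ by expanding the square and using $P^2=P$, then substitutes the eigendecomposition $X^TX = V\Sigma^2 V^T$ to get $\text{tr}(U_i^TX^TXU_i)=\sum_{j=1}^i\sigma_j^2$ and subtracts. You instead work directly with the SVD expansion $X=\sum_j \sigma_j u_j v_j^T$, compute the residual $X-XU_iU_i^T$ explicitly as the tail of that sum, and invoke Frobenius orthogonality of the rank-one pieces. Your route is a bit more geometric and avoids the trace algebra; the paper's route makes the connection to the variance-maximization formulation $\max \text{tr}(U^TX^TXU)$ explicit. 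Either way the content is the same SVD fact, and both arguments are complete.
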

\begin{proof}
The reconstruction error $f$ for a given projection matrix $P = UU^T$ can be re-written as:
\begin{align*}
    f(P) &= \|X - XP\|_F^2\\
    & = \text{tr}\left(\left(X  - XP\right)^T\left(X  - XP\right)\right)\\
    &= \text{tr}\left(X^TX - X^TXP - PX^TX + PX^TXP\right)\\
    &= \text{tr}\left(X^TX\right) - \text{tr}\left(X^TXP\right) - \text{tr}\left(PX^T\right) + \text{tr}\left(PX^TXP\right)\\
    &= \text{tr}\left(X^TX\right) - \text{tr}\left(X^TXP\right) - \text{tr}\left(X^TXP\right) + \text{tr}\left(X^TXPP\right)\\
    &= \text{tr}\left(X^TX\right) - \text{tr}\left(X^TXP\right) - \text{tr}\left(X^TXP\right) + \text{tr}\left(X^TXP\right)\\ 
    &= \text{tr}\left(X^TX\right) - \text{tr}\left(X^TXP\right)\\ 
    &= \text{tr}\left(X^TX\right) - \text{tr}\left(U^TX^TXU\right)\\ 
\end{align*}
Vanilla PCA minimizes reconstruction error which is equivalent to maximizing $\text{tr}\left(U^TX^TXU\right)$. The matrix $X^T$ is a symmetric matrix that can be diagonalized as $V\Sigma^2V^T$ where the columns of $V$ are the right singular vectors of $X$ and $\Sigma$ is a diagonal matrix where the diagonal values are the singular values of $X$ sorted by magnitude. 

Maximizing $\text{tr}\left(U^TX^TXU\right)$, where the columns of $U$ are orthonormal then amounts to setting the columns of $U$ to be the leading right singular vectors of $X$. Now the reduction in reconstruction error can be written as:
\begin{align*}
    f\left(P_{i-1}\right) &- f\left(P_i\right) =  \text{tr}\left(U_i^TX^TXU_i\right) - \text{tr}\left(U_{i-1}^TX^TXU_{i-1}\right)\\ 
    &= \text{tr}\left(V_i^T\left(V\Sigma^2V^T\right)V_i\right) - \text{tr}\left(V_{i-1}^T\left(V\Sigma^2V^T\right)V_{i-1}\right)\\
    &= \sum_{j=1}^i \sigma_j^2 - \sum_{j=1}^{i-1} \sigma_j^2\\
    &= \sigma_i^2
\end{align*}
\end{proof}

\subsubsection{Trivial re-weighting} \label{sec:trivial-re-weighting}
We will show that the convex relaxation used for PCA, as introduced in \citet{olfat2019convex} and \citet{arora2013stochastic} is no longer applicable even in the simple case of re-weighting the re-construction error. 

\begin{claim}
    Replacing the PCA objective function with $\sum_{j=1}^m w_j \|X_{.j} - \hX_{.j}\|^2$ results in a non-convex optimization problem.
\end{claim}

\begin{proof}
    If re-formulate the objective in terms of a trace, as in the PCA relaxation, we have the following where the diagonal entries of $D$ are $\sqrt{w_j}$:
    \begin{align}
        &\sum_{j=1}^m w_j \|X_{.j} - \hX_{.j}\|^2 = \|XD - XPD\|^2\\
        & = \text{tr}\left(\left(XD  - XPD\right)^T\left(XD  - XPD\right)\right)\\
        &= \text{tr}\left(DX^TXD - DX^TXPD - PDX^TXD + PDX^TXPD\right)\\
        &= \text{tr}\left(DX^TXD\right) - \text{tr}\left(DX^TXDP\right) - \text{tr}\left(PDX^TXD\right) + \text{tr}\left(PDX^TXPD\right)\\
        &= \text{tr}\left(DX^TXD\right) - 2\text{tr}\left(DX^TXDP\right) + \text{tr}\left(DX^TXPDP\right)
    \end{align}
    Because the right term cannot be simplified with $P^2 = P$, the objective is a quadratic function of the entries of $P$ and in general, non-convex.
\end{proof}

\subsubsection{Runtime complexity of \algname{}} \label{sec:runtime-complexity}

\begin{claim}
    The runtime complexity of \algname{} is $\mathcal{O}^{5.5}$ where $m$ is the number of items.
\end{claim}
\begin{proof}
    The standard form of a semi-definite program over a symmetric matrix $X\in\mathbb{R}^{d \times d}$, consists of (i) a linear objective over $X$ (ii) $c$ linear equality constraints  (iii) the constraint $X \succeq 0$. \citet{jiang2020faster} present an interior point method for solving SDPs with a runtime complexity of:
    \begin{equation} \label{eq:sdp-runtime}
        \mathcal{O}\left(\sqrt{d}\left(cd^2 + c^\omega + d^\omega\right)\log\left(\frac{1}{\delta}\right)\right)
    \end{equation}
Where $\delta$ is a relative accuracy parameter and $\omega$ is the matrix-multiplication runtime exponent.

Let us re-write the SDP in \algname{}, as introduced in \cref{sec:alg-subsection} in standard form. We introduce a change of variables to rewrite $P \preceq I$ as linear equality constraints. Let $Z$ be a $2m \times 2m$ matrix where the upper-left diagonal $m \times m$ submatrix is $P$. Let us introduce linear equality constraints such that the lower-right $m \times m$ submatrix is defined as $I - P$ ($m^2$ element-wise constraints), and the off-diagonal submatrices are defined as $0$ ($2m^2$ element-wise constraints). When reformulating the \algname{} SDP in standard form in terms of $Z$, the constraint $Z \succeq 0$ implies that $P \succeq 0$ and $I - P \succeq 0$, or $I \succeq P$, recovering our desired matrix inequalities. The objective and trace constraint can be trivially re-written in terms of $Z$.\footnote{From \cref{thm:comp_efficient}, the optimal solution to \algname{} is an extreme-point solution, so we can equivalently write $\text{tr}(P) = r$.}

After re-expressing the SDP in terms of $Z$, we have $d = 2m$ and $c=m^2 + 2m^2 + 1 = 3m^2 + 1$. Substituting into \cref{eq:sdp-runtime}, the dominant term is $\mathcal{O}\left(\sqrt{d}c^\omega \log\left(\frac{1}{\delta}\right)\right)$ which can be simplified as $\mathcal{O}\left(m^{2\omega + 1/2} \log\left(\frac{1}{\delta}\right)\right)$. Let us set $\omega \leq 2.5$~\cite{williams2012multiplying} as a conservative upper bound and consider $\delta$ a constant. Then, we have the runtime of \algname{} is $\mathcal{O}\left(m^{5.5}\right)$.
\end{proof}

\section{Supplemental Figures}
\subsection{Singular value scaling of Bernoulli matrices}
We empirically check that \cref{assump:popular} is satisfied for the class of Bernoulli matrices used in \cref{thm:unfair}.

We fix $M = 20$, and vary $n$ from 100 to 100,000.
For each item $j \in [M]$, we draw a random number $p_j \in [0.1, 1]$ uniformly, which represents the Bernoulli parameter for item $j$.
Then, we draw a matrix $X \in \{0, 1\}^{n \times M}$, where $X_{ij} \sim \text{Bernoulli}(p_j)$ independently for each $i, j$.
Then, we denote $s^2_{\min}(X)$ to be the smallest value of the squared singular values of $X$.
For \cref{assump:popular} to be satisfied, $s^2_{\min}(X)$ must scale linearly in $n$.

For one set of values of the Bernoulli parameters $\{p_j\}_{j=1, \dots, M}$, we draw the random matrix $X$ 500 times, and we show the average $s^2_{\min}(X)$ value as well as the 99'th percentile values in Figure~\ref{fig:bernoulli_eigvals}.
The figure shows that the smallest squared singular value does indeed increase linearly with high probability.

\begin{figure}[ht]
    \centering
    \includegraphics[width=0.7\linewidth]{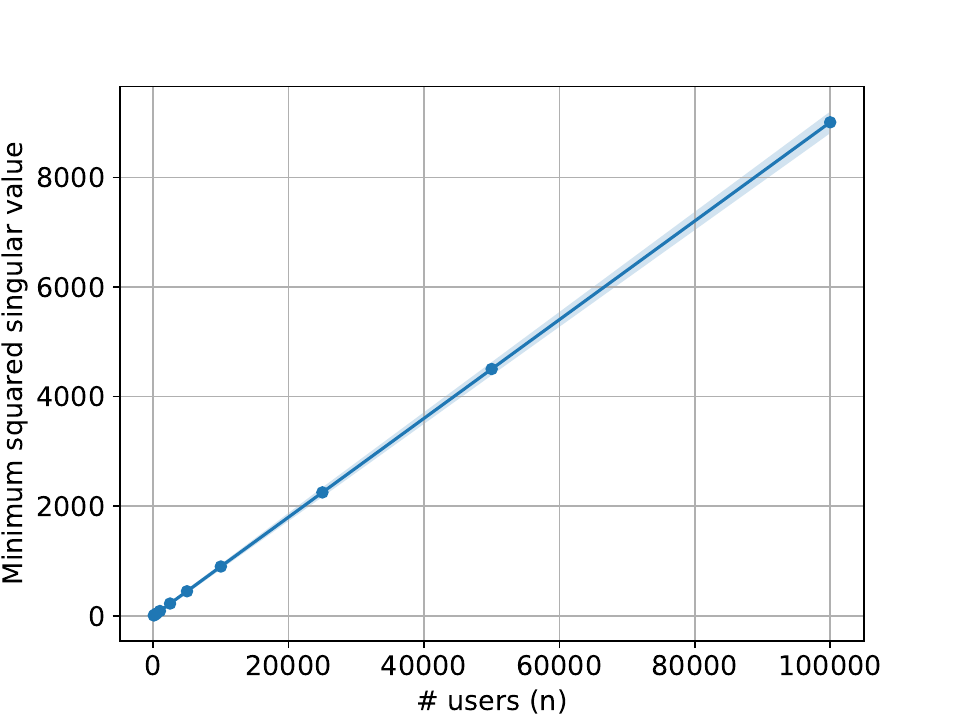}
    \Description{A figure to validate theoretical claims regarding the singular values of Bernoulli matrices. The x-axis is the number of users (rows), and the y-axis is the minimum squared singular value. The points form a line and there is a shaded region, tightly fit around the line, to show the variance of the smallest singular value.}
    \caption{The line corresponds to the average of the smallest squared singular value of the random Bernoulli matrix $X$. The shaded region corresponds to the 1 and 99th percentiles.}
    \label{fig:bernoulli_eigvals}
\end{figure}

\subsection{All performance metrics}\label{sec:all_metrics}
To supplement the Recall@$k$ and NDCG@$k$ metrics shown in \cref{fig:downstream}, \cref{fig:all_metrics} shows that \algname{} increases peak Precision@$k$ and MRR@$k$ as well across both LastFM and MovieLens. For both datasets and all four downstream evaluation metrics, the peak value for \algname{} across all rank budgets is higher than the peaks for the two PCA baselines.
\begin{figure*}[ht]
     \centering
     \begin{subfigure}[b]{0.97\linewidth}
         \centering
         \includegraphics[width=\textwidth]{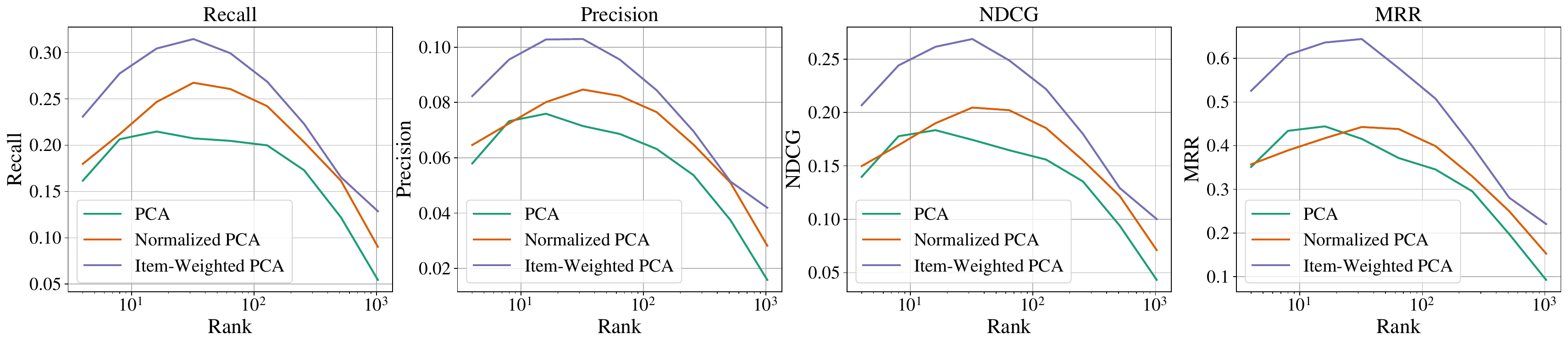}
         \caption{LastFM}
     \end{subfigure}
     \begin{subfigure}[b]{0.97\linewidth}
         \centering
         \includegraphics[width=\textwidth]{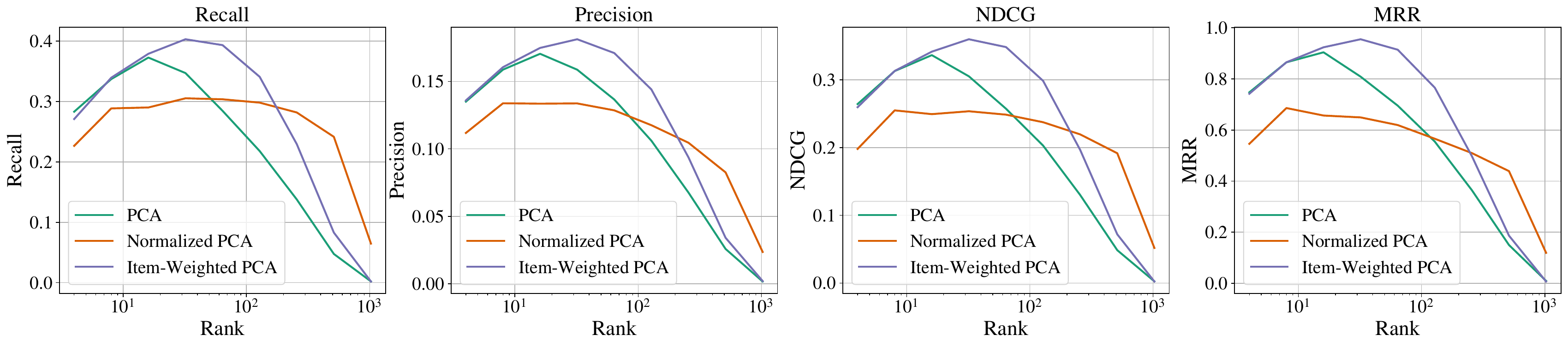}
         \caption{MovieLens}
     \end{subfigure}
    \Description{Performance of PCA models for all four downstream metrics. For each dataset, there are four line plots: Recall@k, Precision@k, NDCG@K, and MRR@K. In all plots, the x-axis represents the rank, and the y-axis is the downstream evaluation metric.}
        \caption{\algname{} increases peak (among all evaluated values of $r$) Precision@$k$ and MRR@$k$ in addition to, as shown in \cref{fig:downstream}, NDCG@$k$ and Recall@$k$ for both LastFM and MovieLens.}
        \label{fig:all_metrics}
\end{figure*}

\subsection{Comparison with broader broader baselines} \label{sec:all_baselines}
\begin{figure*}[h]
     \centering
     \begin{subfigure}[b]{0.97\linewidth}
         \centering
         \includegraphics[width=\textwidth]{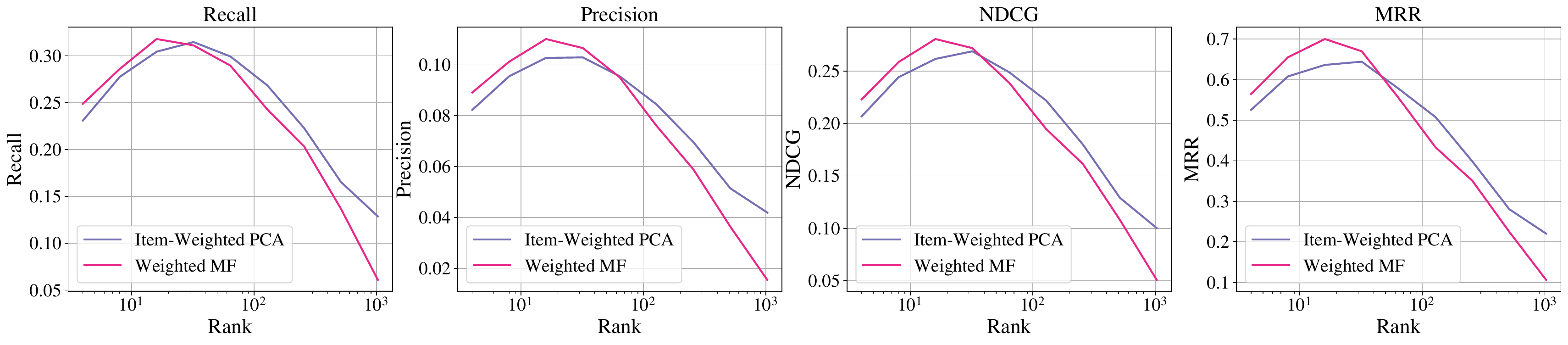}
         \caption{LastFM}
         \label{fig:lastfm_all_baselines}
     \end{subfigure}
     \begin{subfigure}[b]{0.97\linewidth}
         \centering
         \includegraphics[width=\textwidth]{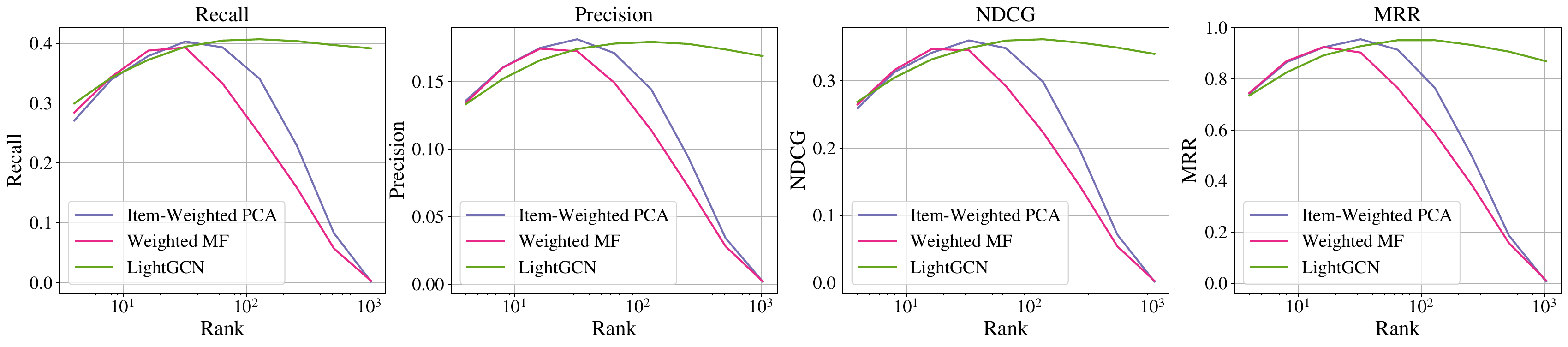}
         \caption{MovieLens}
         \label{fig:movielens_all_baselines}
     \end{subfigure}
     \Description{A figure comparing Item-Weighted PCA with more advanced baselines. For each dataset, there is a line plot for each downstream evaluation metric. For LastFM, Item-Weighted PCA is compared against Weighted Matrix Factorization, and for MovieLens, our algorithm is compared against Weighted Matrix Factorization and LightGCN.}
        \caption{Compared to more advanced baselines such as non-convex matrix factorization and LightGCN, \algname{} yields comparable performance. As LightGCN is applicable only to binary interaction data, we include results only for MovieLens. At higher rank values, performance for \algname{} and Weighted Matrix Factorization (MF) decreases likely due to overfitting, which LightGCN avoids.}
        \label{fig:all_baselines}
\end{figure*}
We also compare the downstream user performance of \algname{} against that of more advanced baselines. Specifically, we compare against the implicit matrix factorization approach of \citet{hu2008collaborative} as well as a deep CF model in LightGCN \cite{he2020lightgcn}. The objective function in \citet{hu2008collaborative} differs from PCA in that the objective acknowledges many values in $X$ are missing, and assigns higher ``confidence'' to observed values. We utilize the implementation of \citet{hu2008collaborative} available in the \texttt{implicit} package with $\alpha=0$. On the other hand, LightGCN is an example of a more recent message-passing-based deep CF model. 

Fig.~\ref{fig:all_baselines} includes the results for the supplemental baselines, where \citet{hu2008collaborative} is represented by ``Weighted MF''. We only include LightGCN results for MovieLens because the model is designed for binary interaction data. Across all four metrics and both graphs, \algname{} maintains performance on par with the baselines. The main difference is that in the high-rank regime, LightGCN does not overfit to the training data while the shallow methods do.

\subsection{Robustness analysis} \label{sec:robustness}
We evaluate the performance of \algname{} for various values of $\gamma$. Fig. \ref{fig:gamma_sensitivity} shows the Recall@$20$ of \algname{} on the MovieLens dataset for fixed $r=32$ and $\gamma\in[-2, 0]$ compared to the PCA baselines. Decreasing $\gamma$ from $0$ (Vanilla PCA) initially improves performance (perhaps due to better-inferred item similarities following re-weighting) before harming performance (perhaps over-compensating and over-up-weighting less popular items). The results for $\gamma=-1$ correspond to the results reported in \cref{fig:movielens_missing_data}. As expected from Theorem \ref{thm:instantiations}, the performance of Item-Weighted PCA at $\gamma=0$ and PCA are equal; the performance of \algname{} at $\gamma=-2$ and Normalized PCA are not equal because the data matrix is not an item-regular block-diagonal matrix.
\begin{figure}[ht]
    \centering
    \includegraphics[width=0.8\linewidth]{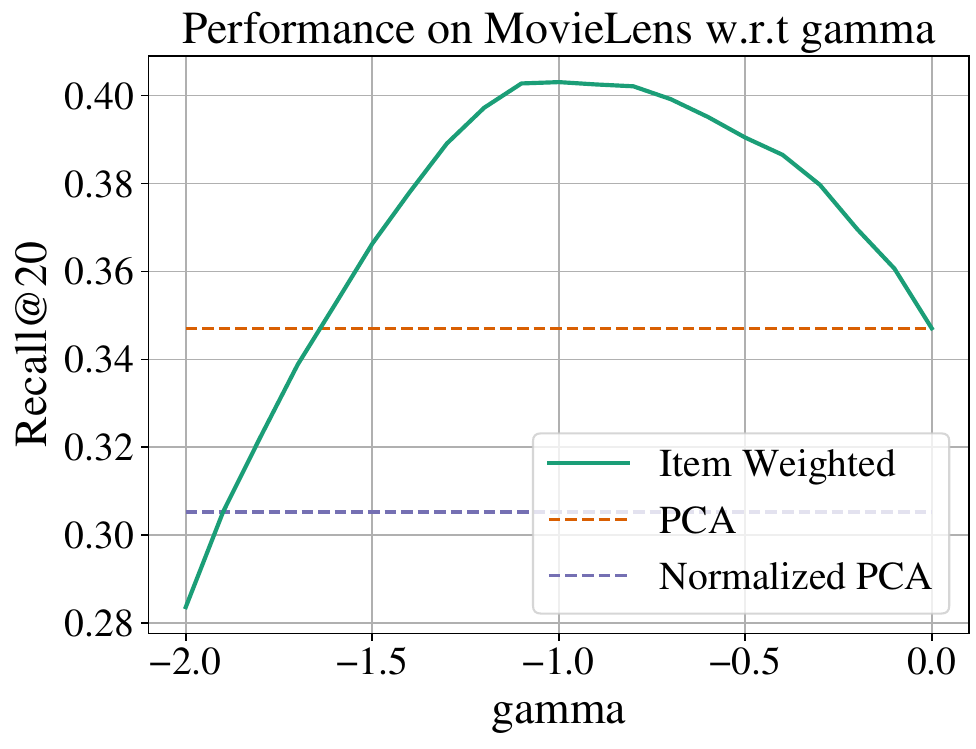}
    \Description {Robustness of Item-Weighted PCA to changes in gamma on the MovieLens dataset. The x-axis is gamma, ranging from -2 to 0, and the y-axis is Recall@20. The performance curve for Item-Weighted PCA peaks at gamma = -1. Horizontal dashed lines indicate the recall values for PCA and Normalized PCA, which do not depend on gamma.}
    \caption{We sweep possible values of $\gamma$ (x-axis) for \algname{} while fixing $r=32$  and report the user-level recall on the hold-out set (y-axis) for MovieLens. The dashed lines correspond to the PCA baselines. On the MovieLens dataset, \algname{} performs best at $\gamma=-1$, which corresponds to the results reported in the main body.}
    \label{fig:gamma_sensitivity}
\end{figure}

\end{document}